\theoremstyle{plain}
\newtheorem{theorem}{Theorem}
\newtheorem{lemma}{Lemma}
\newtheorem{condition}{Condition}
\theoremstyle{definition}
\theoremstyle{remark}
\newtheorem{remark}{Remark}
\def \f{Fr\'echet }
\DeclareMathOperator*{\argmin}{arg\,min}
\DeclareMathOperator{\Var}{Var}
\def\T{{ \mathrm{\scriptscriptstyle T} }}
\newcommand{\anon}{1}
\begin{document}

\def\spacingset#1{\renewcommand{\baselinestretch}%
{#1}\small\normalsize} \spacingset{1}


\if1\anon
{
  \title{\bf Wasserstein Transfer Learning}
    \author{
    Kaicheng Zhang\textsuperscript{1}\thanks{Equal contribution.} \quad
    Sinian Zhang\textsuperscript{2}\footnotemark[1] \quad
    Doudou Zhou\textsuperscript{3}\thanks{Corresponding author.} \quad
    Yidong Zhou\textsuperscript{4}\footnotemark[2] \\
    \textsuperscript{1}School of Mathematical Sciences, Zhejiang University, China \\
    \textsuperscript{2}Division of Biostatistics and Health Data Science, \\ University of Minnesota, USA \\
    \textsuperscript{3}Department of Statistics and Data Science, \\ National University of Singapore, Singapore \\
    \textsuperscript{4}Department of Statistics, University of California, Davis, USA \\
    \texttt{ddzhou@nus.edu.sg}, \texttt{ydzhou@ucdavis.edu}
    }
  \maketitle
} \fi

\if0\anon
{
  \bigskip
  \bigskip
  \bigskip
  \begin{center}
    {\LARGE\bf Title}
\end{center}
  \medskip
} \fi

\bigskip
\begin{abstract}
Transfer learning is a powerful paradigm for leveraging knowledge from source domains to enhance learning in a target domain. However, traditional transfer learning approaches often focus on scalar or multivariate data within Euclidean spaces, limiting their applicability to complex data structures such as probability distributions. To address this limitation, we introduce a novel transfer learning framework for regression models whose outputs are probability distributions residing in the Wasserstein space. When the informative subset of transferable source domains is known, we propose an estimator with provable asymptotic convergence rates, quantifying the impact of domain similarity on transfer efficiency. For cases where the informative subset is unknown, we develop a data-driven transfer learning procedure designed to mitigate negative transfer. The proposed methods are supported by rigorous theoretical analysis and are validated through extensive simulations and real-world applications. The code is available at \href{https://github.com/h7nian/WaTL}{https://github.com/h7nian/WaTL}.
\end{abstract}

\noindent%
{\it Keywords:} Data Aggregation, Domain Adaptation, Multitask Learning, non-Euclidean Output, Optimal Transport
\vfill

\newpage
\spacingset{1.8} 

\section{Introduction}
In recent years, transfer learning \citep{torr:10} has emerged as a powerful paradigm in machine learning, enabling models to leverage knowledge acquired from one domain and apply it to related tasks in another. This approach has proven especially valuable in scenarios where data collection and labeling can be costly, or where tasks exhibit inherent similarities in structure or representation. While early successes focused on conventional data types such as images \citep{shin:16}, text \citep{raff:20}, and tabular data \citep{weis:16}, there is growing interest in extending these methods to more complex data structures. Such data often reside in non-Euclidean spaces and lack basic algebraic operations like addition, subtraction, or scalar multiplication, posing challenges for traditional learning algorithms. A key example is probability distributions \citep{pete:22}, where for example the sum of two density functions does not yield a valid density. 

Samples of univariate probability distributions are increasingly encountered across various research domains, such as mortality analysis \citep{ghod:21}, temperature studies \citep{mull:23:2}, and physical activity monitoring \citep{lin:23}, among others \citep{pete:22}. Recently, there has been a growing focus on directly modeling distributions as elements of the Wasserstein space, a geodesic metric space related to optimal transport \citep{vill:09, pana:20}. The absence of a linear structure in this space motivates the development of specialized transfer learning techniques that respect its intrinsic geometry.

To address this gap, we introduce \textit{Wasserstein Transfer Learning} (WaTL), a novel transfer learning framework for regression models where outputs are univariate probability distributions. WaTL effectively leverages knowledge from source domains to improve learning in a target domain by intrinsically incorporating the Wasserstein metric, which provides a natural way to measure discrepancies between probability distributions.

\subsection{Contributions}
The primary contributions of this work are summarized as follows:

\textbf{Methodology.} We propose a novel transfer learning framework for regression models with distributional outputs, addressing the challenges inherent in the Wasserstein space, which lacks a conventional linear structure. Our framework includes an efficient algorithm for cases where the informative subset of source domains is known, and a data-driven algorithm for scenarios where the subset is unknown. To the best of our knowledge, this is the first comprehensive transfer learning approach specifically designed for regression models with outputs residing in the Wasserstein space. 

\textbf{Theoretical analysis.} We establish the asymptotic convergence rates for the WaTL algorithm in both the case where the informative set is known and the more challenging scenario where it must be estimated. In the latter case, we also prove that the informative set can be consistently identified. In both settings, we demonstrate that WaTL effectively improves model performance on the target data by leveraging information from the source domain. The proofs rely heavily on empirical process theory and a careful analysis of the covariate structure. Our key theoretical results extend beyond responses lying in the Wasserstein space, offering potential applications to other complex outputs.

\textbf{Simulation studies and real-world applications.} We evaluate WaTL through simulations and real data applications, demonstrating its effectiveness in improving target model performance by leveraging source domain information. The benefits become more pronounced with larger source sample sizes, underscoring its ability to harness transferable knowledge.

\subsection{Related Work}
\textbf{Transfer learning.} Transfer learning aims to improve performance in a target population by leveraging information from a related source population and has seen wide application across domains \citep[e.g.,][]{huan:13,gong:12,choi:17,zhou2024doubly,zhou2024model}. Recent theoretical developments have focused on regression in Euclidean settings, including high-dimensional linear \citep{li:22:1} and generalized linear models \citep{Tian:21}, nonparametric regression \citep{cai:24:1,lin:24}, and function mean estimation from discretely sampled data \citep{cai:24:2}. In parallel, optimal transport has been used to measure distributional shifts for domain adaptation \citep{cour:17,redk:20}. However, to the best of our knowledge, no existing work has investigated transfer learning in regression models where outputs are probability distributions residing in the Wasserstein space. This represents a significant gap in the literature, highlighting the need for novel methodologies that address this challenging yet important setting.

\textbf{Distributional data analysis.} The increasing prevalence of data where distributions serve as fundamental units of observation has spurred the development of distributional data analysis \citep{pete:22}. Recent advancements in this field include geodesic principal component analysis in the Wasserstein space \citep{bigo:17}, autoregressive models for time series of distributions \citep{zhan:22,mull:23:2}, and distribution-on-distribution regression \citep{ghod:21,chen:23:1}. Leveraging the Wasserstein metric, regression models with distributional outputs and Euclidean inputs can be viewed as a special case of \f regression \citep{mull:19:6}, which extends linear and local linear regression to outputs residing in general metric spaces. In practical scenarios where only finite samples from the unknown distributional output are available, empirical measures have been utilized as substitutes for the unobservable distributions in regression models \citep{zhou:24:1}.

\section{Preliminaries}\label{sec:2}
\subsection{Notations}
Let $L^2(0,1)$ be the space of square-integrable functions over the interval $(0,1)$, with the associated $L^2$ norm and metric denoted by $\|\cdot\|_2$ and $d_{L^2}$, respectively. To be specific, $\|g\|_2=(\int_0^1g^2(z)dz)^{1/2}  \text{ and } d_{L^2}(g_1,g_2) = \|g_1 - g_2\|_2$. For a vector $Z$, $\|Z\|$ denotes the Euclidean norm. Given a matrix $\Sigma$, we define its spectrum as the set of its singular values. For a sub-Gaussian random vector $X$, we define the sub-Gaussian norm as 
$\|X\|_{\Psi_2} := \sup_{\|v\|=1} \inf \left\{ t>0 : E(e^{\langle X, v \rangle^2/t^2}) \leq 2 \right\}.$

We write $a_n\lesssim b_n$ if there exists a positive constant $C$ such that $a_n\leq Cb_n$ when $n$ is large enough and $a_n \asymp b_n$ if $a_n\lesssim b_n$ and $b_n\lesssim a_n$. The notation $a_n = O_p(b_n)$ implies that $P(|a_n/b_n| \leq C) \to 1$ for some constant $C > 0$, while $a_n = o_p(b_n)$ implies that $P(|a_n/b_n| > c) \to 0$ for any constant $c > 0$.  
Superscripts typically indicate different data sources, while subscripts distinguish individual samples from the same source.

\subsection{Wasserstein Space}
Let $\mathcal{W}$ denote the space of probability distributions on $\mathbb{R}$ with finite second moments, equipped with the 2-Wasserstein, or simply Wasserstein, metric. For two distributions $\mu_1, \mu_2 \in \mathcal{W}$, the Wasserstein metric is given by $d_{\mathcal{W}}^2(\mu_1, \mu_2) = \inf_{\pi \in \Pi(\mu_1, \mu_2)} \int_{\mathbb{R} \times \mathbb{R}} |s - t|^2 \, d\pi(s, t)$, where $\Pi(\mu_1, \mu_2)$ denotes the set of all joint distributions with marginals $\mu_1$ and $\mu_2$ \citep{kant:42}. For a probability measure $\mu \in \mathcal{W}$ with cumulative distribution function $F_{\mu}$, we define the quantile function $F_{\mu}^{-1}$ as the left-continuous inverse of $F_{\mu}$, such that $F_{\mu}^{-1}(u) = \inf \{ t \in \mathbb{R} | F_{\mu}(t) \geq u \}$, for $u \in (0, 1)$. It has been established \citep{vill:09} that the Wasserstein metric can be expressed as the $L^2$ metric between quantile functions:
\begin{equation}\label{eq:dw}
    d_{\mathcal{W}}^2(\mu_1, \mu_2) = \int_0^1 \left\{ F_{\mu_1}^{-1}(u) - F_{\mu_2}^{-1}(u) \right\}^2 du.
\end{equation}
The space $\mathcal{W}$, endowed with the Wasserstein metric, forms a complete and separable metric space, commonly known as the Wasserstein space \citep{vill:09}.

Assuming $E\{ d_{\mathcal{W}}^2(\nu, \mu)\} < \infty$ for all $\mu \in \mathcal{W}$, the \f mean \citep{frec:48} of a random distribution $\nu\in\mathcal{W}$ is given by $\nu_{\oplus} = \argmin_{\mu \in \mathcal{W}} E\{ d_{\mathcal{W}}^2(\nu, \mu) \}$. Since the Wasserstein space $\mathcal{W}$ is a Hadamard space \citep{kloe:10}, the \f mean is well-defined and unique. Moreover, from \eqref{eq:dw}, it follows that the quantile function of the \f mean, denoted as $F^{-1}_{\nu_{\oplus}}$, satisfies $F^{-1}_{\nu_{\oplus}}(u) = E\{ F_{\nu}^{-1}(u) \}, u \in (0,1)$.

\subsection{\f Regression}
Consider a random pair $(X, \nu)$ with joint distribution $\mathcal{F}$ on the product space $\mathbb{R}^p \times \mathcal{W}$. Let $X$ have mean $\theta = E(X)$ and covariance matrix $\Sigma = \Var(X)$, where $\Sigma$ is assumed to be positive definite. To establish a regression framework for predicting the distributional response $\nu$ from the covariate $X$, we employ the \f regression model, which extends multiple linear regression and local linear regression to scenarios where responses reside in a metric space \citep{mull:19:6}. The \f regression function is defined as the conditional \f mean of $\nu$ given $X=x$,
\[m(x) = \argmin_{\mu \in \mathcal{W}} E\{ d_{\mathcal{W}}^2(\nu, \mu)|X = x \}.\]
For a detailed exposition of \f regression, we refer the reader to \cite{mull:19:6}. Given $n$ independent realizations $\{(X_i, \nu_i)\}_{i=1}^{n}$, we define the empirical mean and covariance of $X$ as $\overline{X} = \frac{1}{n} \sum_{i=1}^{n} X_i$ and $\widehat{\Sigma} = \frac{1}{n} \sum_{i=1}^{n} (X_i - \overline{X})(X_i - \overline{X})^\T$.

The global \f regression extends classical multiple linear regression and estimates the conditional \f mean as
\[m_G(x) = \argmin_{\mu \in \mathcal{W}} E\{ s_G(x) d_{\mathcal{W}}^2(\nu, \mu) \},\]
where the weight function is given by $s_G(x) = 1 + (X - \theta)^\T \Sigma^{-1} (x - \theta)$. The empirical estimator is formulated as
\[\widehat{m}_G(x) = \argmin_{\mu \in \mathcal{W}} \frac{1}{n} \sum_{i=1}^{n} s_{iG}(x) d_{\mathcal{W}}^2(\nu_i, \mu),\]
where $s_{iG}(x) = 1 + (X_i - \overline{X})^\T \widehat{\Sigma}^{-1} (x - \overline{X})$.

Similarly, local \f regression extends classical local linear regression to settings with metric space-valued outputs. In the case of a scalar predictor $X \in \mathbb{R}$, the local \f regression function is
\[m_{L, h}(x) = \argmin_{\mu \in \mathcal{W}} E\{ s_L(x, h) d_{\mathcal{W}}^2(\nu, \mu) \},\]
where the weight function is $s_L(x, h) = K_h(X - x) \{ u_2 - u_1 (X - x)\}/\sigma_0^2$, with $u_j = E\{ K_h(X - x)(X - x)^j \}, j = 0, 1, 2$, and $\sigma_0^2 = u_0 u_2 - u_1^2$. Here, $K_h(\cdot) = h^{-1} K(\cdot/h)$ is a kernel function with bandwidth $h$. The empirical version is given by
\begin{equation*}
    \widehat{m}_{L, h}(x) = \argmin_{\mu \in \mathcal{W}} \frac{1}{n} \sum_{i=1}^{n} s_{iL}(x, h) d_{\mathcal{W}}^2(\nu_i, \mu),
\end{equation*}
where $s_{iL}(x, h) = K_h(X_i - x)\{\widehat{u}_2 - \widehat{u}_1 (X_i - x)\}/\widehat{\sigma}_0^2$, with $\widehat{u}_j = n^{-1}\sum_{i=1}^{n} K_h(X_i - x) (X_i - x)^j, j=0, 1, 2$, and $\widehat{\sigma}_0^2 = \widehat{u}_0 \widehat{u}_2 - \widehat{u}_1^2$.

\section{Methodology}\label{sec:3}

\subsection{Setup}
We consider a transfer learning problem where target data $\{(X_i^{(0)}, \nu_i^{(0)})\}_{i=1}^{n_0}$ are sampled independently from the target population $(X^{(0)}, \nu^{(0)})\sim\mathcal{F}_0$, and source data $\{(X_i^{(k)}, \nu_i^{(k)})\}_{i=1}^{n_k}$ are sampled independently from the source population $(X^{(k)}, \nu^{(k)})\sim\mathcal{F}_k$, for $k=1, \ldots, K$. The goal is to estimate the target model using both the target data and source data from $K$ related studies. 

For $k=0, \ldots, K$, assume $X^{(k)}$ has mean $\theta_k$ and covariance $\Sigma_k$, with $\Sigma_k$ positive definite. Define the empirical mean and covariance of $\{X_i^{(k)}\}_{i=1}^{n_k}$ as $\overline{X}_k = n_k^{-1}\sum_{i=1}^{n_k} X_i^{(k)}$ and
$\widehat{\Sigma}_k = \frac{1}{n_k} \sum_{i=1}^{n_k} (X_i^{(k)} - \overline{X}_k)(X_i^{(k)} - \overline{X}_k)^\T.$
For a fixed $x \in \mathbb{R}^p$, the weight function is   
$s_G^{(k)}(x)=1+(X^{(k)} - \theta_k)^\T\Sigma_k^{-1}(x-\theta_k),$
with the sample version
$s_{iG}^{(k)}(x) = 1 + (X_i^{(k)} - \overline{X}_k)^\T \widehat{\Sigma}_k^{-1} (x - \overline{X}_k).$
The target regression function for a given $x \in \mathbb{R}^p$ is then 
$m_G^{(0)}(x)=\argmin_{\mu\in \mathcal{W}}E\{s_G^{(0)}(x)d_{\mathcal{W}}^2(\nu^{(0)},\mu)\}.$
In the following, we present details on transfer learning for global \f regression, where the key difference in the local \f regression setting is the use of a different weight function. The technical details for transfer learning in local \f regression are therefore deferred to Appendix \ref{app:local}.

The set of informative auxiliary samples (informative set) consists of sources sufficiently similar to the target data. Formally, the informative set is defined as $\mathcal{A}_\psi = \big \{ 1\leq k\leq K : \|f^{(0)}(x) -f^{(k)}(x) \|_2\leq  \psi \big \}$ for some $\psi > 0$, where $f^{(k)}(x) =E\{s_{G}^{(k)}(x)F_{\nu^{(k)}}^{-1}\}$. For simplicity, let $n_{\mathcal{A}}=\sum_{k=1}^Kn_k$.

\subsection{Wasserstein Transfer Learning}\label{subsec:3.1}
We propose the Wasserstein Transfer Learning (WaTL) algorithm, which combines information from source datasets under the assumption that all source data are \textit{informative enough}. This assumption implies that the discrepancies between the source and target are small enough to enhance estimation compared to using only the target. When this condition is met for all source datasets, the informative set is given by $\mathcal{A}_\psi = \{1, \ldots, K\}$. The detailed steps of WaTL are presented in Algorithm \ref{alg:1}.

\begin{algorithm}[htbp]
\caption{Wasserstein Transfer Learning (WaTL)}
\label{alg:1}
\begin{algorithmic}[1]
    \REQUIRE Target and source data
    $ \{ (x_{i}^{(0)}, \nu_{i}^{(0)}) \}_{i=1}^{n_{0}} \cup \big( \cup_{1\leq k \leq K}  \{ (x_{i}^{(k)}, \nu_{i}^{(k)})\}_{i=1}^{n_{k}} \big)$, regularization parameter $\lambda$, and query point $x \in \mathbb{R}^p$.
    \ENSURE Target estimator $\widehat{m}_G^{(0)}(x)$.
    \STATE Weighted auxiliary estimator: 
    $\widehat{f}(x) =\frac{1}{n_0+n_{\mathcal{A}}}\sum_{k=0}^{K}n_k\widehat{f}^{(k)}(x)$,
    where $\widehat{f}^{(k)}(x)=n_k^{-1}\sum_{i=1}^{n_k}s_{iG}^{(k)}(x)F^{-1}_{\nu_i^{(k)}}$.
    \STATE Bias correction using target data: 
    $\widehat{f}_0(x) =\argmin_{g\in L^2(0,1)}\frac{1}{n_0}\sum _{i=1}^{n_0}s_{iG}^{(0)}(x)\|F^{-1}_{\nu_i^{(0)}}-g\|_2^2 + \lambda \|g-\widehat{f}(x) \|_2.$
    \STATE Projection to Wasserstein space: 
    $\widehat{m}^{(0)}_G(x) = \argmin_{\mu \in \mathcal{W}}\big \|F^{-1}_{\mu}-\widehat{f}_0(x)\big\|_2.$
\end{algorithmic}
\end{algorithm}

In Step 1, the initial estimate $\widehat{f}$ aggregates information from both the target and source, weighted by their respective sample sizes. While this step incorporates valuable auxiliary information, the resulting estimate may be biased due to distributional differences between the target and source populations. In Step 2, the bias in $\widehat{f}$ is corrected by focusing on the target data. The regularization term $\lambda \|g-\widehat{f}(x)\|_2$ ensures a balance between target-specific precision and auxiliary-informed robustness. Theoretical guidelines for selecting $\lambda$ are provided in Theorem \ref{thm:2}. The final step projects the corrected estimate $\widehat{f}_0$ onto the Wasserstein space, ensuring the output $\widehat{m}^{(0)}_G(x)$ respects the intrinsic geometry of $\mathcal{W}$. This projection exists and is unique because $\mathcal{W}$ is a closed and convex subset of $L^2(0,1)$.

\subsection{Adaptive Selection of Informative Sources}
In many practical scenarios, the assumption that all source datasets belong to the informative set  $\mathcal{A}_\psi$ may not hold. To address this, we extend WaTL with an adaptive selection procedure to identify the informative set. The discrepancy for each source dataset $k$ is defined as $\psi_k=\|f^{(0)}(x)-f^{(k)}(x)\|_2$, which measures the distance between the target distribution and the auxiliary distribution. Since $f^{(0)}(x)$ and $f^{(k)} (x)$ are unknown, we compute an empirical estimate $\widehat{\psi}_k$ for $\psi_k$, which is used to adaptively estimate the informative set $\mathcal{A}_\psi$. To implement this approach, an additional input parameter $L$, which specifies the approximate number of informative sources, is required. In practice, $L$ can be treated as a tuning parameter and selected through cross-validation or other model selection techniques. The full procedure is formalized in Algorithm~\ref{alg:2}.

The proposed algorithm adaptively identifies the informative set $\widehat{\mathcal{A}}$ in Step 1 by evaluating the empirical discrepancy scores $\widehat{\psi}_k$. The selected set is then used to compute the weighted auxiliary estimator in Step 2, ensuring that only the most relevant source datasets contribute to the final target estimator. Steps 3 and 4 follow the same bias correction and projection procedures as described in Section~\ref{subsec:3.1}. This adaptive approach enhances the robustness of WaTL by excluding irrelevant or highly dissimilar source datasets.

\begin{algorithm}[tb]
\caption{Adaptive Wasserstein Transfer Learning (AWaTL)}
\label{alg:2}
\begin{algorithmic}[1]
    \REQUIRE Target and source data
    $ \{ (x_{i}^{(0)}, \nu_{i}^{(0)}) \}_{i=1}^{n_{0}} \cup \big(\cup_{1\leq k \leq K}  \{ (x_{i}^{(k)}, \nu_{i}^{(k)}) \}_{i=1}^{n_{k}} \big)$,  regularization parameter $\lambda$, prespecified number of informative sources $L$, and query point $x \in \mathbb{R}^p$.
    \ENSURE Target estimator $\widehat{m}_G^{(0)}(x)$.
    \STATE Compute discrepancy scores. For each source dataset  $k = 1,\ldots, K$, compute the empirical discrepancy: 
    $\widehat{\psi}_k=\|\widehat{f}^{(0)}(x)-\widehat{f}^{(k)}(x)\|_2,$
    where $\widehat{f}^{(k)}(x)=n_k^{-1}\sum_{i=1}^{n_k}s_{iG}^{(k)}(x)F^{-1}_{\nu_i^{(k)}}$. Construct the adaptive informative set by selecting the $L$ smallest discrepancy scores $\widehat{\mathcal{A}} = \{\, k : 1 \le k \le K \text{ and } \widehat{\psi}_k \text{ is among the } L \text{ smallest values} \,\}.$.
    \STATE Weighted auxiliary estimator: 
    $\widehat{f}(x) =\frac{1}{\sum_{k\in\widehat{
        \mathcal{A}}\cup\{0\}}n_k}\sum_{k\in \widehat{\mathcal{A}}\cup\{0\}}n_k\widehat{f}^{(k)}(x).$
    \STATE Bias correction using target data:
    $\widehat{f}_0(x) = \argmin_{g\in L^2(0,1)}\frac{1}{n_0}\sum _{i=1}^{n_0}s_{iG}^{(0)}(x)\|F^{-1}_{\nu_i^{(0)}}-g\|_2^2+\lambda \|g-\widehat{f}(x)\|_2.$
    \STATE Projection to Wasserstein space:
    $\widehat{m}_G^{(0)}(x)=\argmin_{\mu \in \mathcal{W}}\big\|F^{-1}_{\mu}-\widehat{f}_0(x) \big\|_2.$
\end{algorithmic}
\end{algorithm}

\section{Theory}\label{sec:4}
In this section, we establish the theoretical guarantees of the proposed WaTL and AWaTL algorithms using techniques from empirical process theory \citep{well:96}. For WaTL, we present the following lemma, which characterizes the convergence rate for each term contributing to the weighted auxiliary estimator $\widehat{f}(x)$ computed in Step 1.
 \begin{condition}\label{con:1}
For $k=0, \ldots, K$, the covariate $X^{(k)}$ is sub-Gaussian with $\|X^{(k)}\|_{\Psi_2}\in[\sigma_1, \sigma_2]$, the mean vector satisfies $\|\theta_k\|\leq R_1$, and the spectrum of the covariance matrix $\Sigma_k$ lies within the interval $[R_2, R_3]$. Moreover, $\nu^{(k)}$ is supported on a bounded interval.
\end{condition}
\begin{lemma}\label{lem:1}
Let $\widehat{f}^{(k)}(x)=n_k^{-1}\sum_{i=1}^{n_k}s_{iG}^{(k)}(x)F_{\nu_i^{(k)}}^{-1}$ and its population counterpart be defined as $f^{(k)}(x)=E\{s_{G}^{(k)}(x)F_{\nu^{(k)}}^{-1}\}$ for $k=0, \ldots, K$. Then under Condition~\ref{con:1}, $\|\widehat{f}^{(k)}(x)-f^{(k)}(x)\|_2=O_p(n_k^{-1/2})$.
\end{lemma}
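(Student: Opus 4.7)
The plan is to introduce an oracle decomposition that separates two sources of randomness: (i) the i.i.d.\ averaging with \emph{population} weights and (ii) the plug-in perturbation from using the empirical mean $\overline{X}_k$ and empirical inverse covariance $\widehat{\Sigma}_k^{-1}$ in place of $\theta_k$ and $\Sigma_k^{-1}$. Define the oracle estimator
\[
\widetilde{f}^{(k)}(x) = \frac{1}{n_k}\sum_{i=1}^{n_k} \tilde{s}_{iG}^{(k)}(x)\, F^{-1}_{\nu_i^{(k)}}, \qquad \tilde{s}_{iG}^{(k)}(x) = 1 + (X_i^{(k)} - \theta_k)^\T \Sigma_k^{-1}(x - \theta_k),
\]
and write $\widehat{f}^{(k)}(x)-f^{(k)}(x) = [\widehat{f}^{(k)}(x)-\widetilde{f}^{(k)}(x)] + [\widetilde{f}^{(k)}(x)-f^{(k)}(x)]$. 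It suffices to control each piece at rate $n_k^{-1/2}$ in $\|\cdot\|_2$ and combine via the triangle inequality.

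For the oracle error $\widetilde{f}^{(k)}(x)-f^{(k)}(x)$, the summands $\tilde{s}_{iG}^{(k)}(x) F^{-1}_{\nu_i^{(k)}}$ are i.i.d.\ $L^2(0,1)$-valued random elements with mean $f^{(k)}(x)$. Since Condition~\ref{con:1} guarantees that $\nu^{(k)}$ is supported on a bounded interval, the quantile functions $F^{-1}_{\nu_i^{(k)}}$ are uniformly bounded in $\|\cdot\|_2$; together with the sub-Gaussianity of $X^{(k)}$ and the spectral bound on $\Sigma_k^{-1}$, this yields $E\|\tilde{s}_{iG}^{(k)}(x)F^{-1}_{\nu_i^{(k)}}\|_2^2<\infty$. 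Chebyshev's inequality in the Hilbert space $L^2(0,1)$ then gives $\|\widetilde{f}^{(k)}(x)-f^{(k)}(x)\|_2 = O_p(n_k^{-1/2})$.

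For the plug-in error, I would first invoke standard sub-Gaussian concentration: $\|\overline{X}_k - \theta_k\| = O_p(n_k^{-1/2})$ and $\|\widehat{\Sigma}_k - \Sigma_k\|_{op} = O_p(n_k^{-1/2})$, the latter yielding $\|\widehat{\Sigma}_k^{-1} - \Sigma_k^{-1}\|_{op} = O_p(n_k^{-1/2})$ via a standard matrix perturbation argument since the spectrum of $\Sigma_k$ is uniformly bounded away from zero. Setting $\delta = \theta_k - \overline{X}_k$ and $E = \widehat{\Sigma}_k^{-1} - \Sigma_k^{-1}$, the weight difference $s_{iG}^{(k)}(x) - \tilde{s}_{iG}^{(k)}(x)$ expands into cross terms each of which is linear or quadratic in $(\delta, E)$. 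Using the uniform $L^2$ bound on $F^{-1}_{\nu_i^{(k)}}$,
\[
\|\widehat{f}^{(k)}(x) - \widetilde{f}^{(k)}(x)\|_2 \lesssim \frac{1}{n_k}\sum_{i=1}^{n_k} |s_{iG}^{(k)}(x) - \tilde{s}_{iG}^{(k)}(x)|,
\]
and Cauchy--Schwarz bounds each averaged cross term by a product of quantities like $\|\delta\|$, $\|E\|_{op}$, $\|x-\theta_k\|$, and $n_k^{-1}\sum_i \|X_i^{(k)}-\theta_k\|$, the last being $O_p(1)$ by the law of large numbers for sub-Gaussian variables.

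The main obstacle is the bookkeeping in the plug-in step: one must enumerate all cross terms in the expansion and verify that the terms linear in $(\delta, E)$ are exactly $O_p(n_k^{-1/2})$ while the higher-order terms are $O_p(n_k^{-1})$ or better. Once that is done, the triangle inequality combining the oracle and plug-in pieces delivers the claimed rate.
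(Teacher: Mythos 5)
Your proof is correct, but it takes a genuinely different route from the paper's. You exploit the fact that $\widehat{f}^{(k)}(x)$ is an explicit weighted average in the Hilbert space $L^2(0,1)$: the oracle part $\widetilde{f}^{(k)}(x)-f^{(k)}(x)$ is an i.i.d.\ mean of $L^2$-valued random elements with finite second moment, handled by Chebyshev's inequality in Hilbert space, and the plug-in part is a deterministic perturbation analysis in $(\overline{X}_k-\theta_k,\widehat{\Sigma}_k^{-1}-\Sigma_k^{-1})$, each of which concentrates at rate $n_k^{-1/2}$. The paper instead treats $\widehat{f}^{(k)}(x)$ as an M-estimator, i.e., the minimizer of $g\mapsto n_k^{-1}\sum_i s_{iG}^{(k)}(x)\|F^{-1}_{\nu_i^{(k)}}-g\|_2^2$ over a bounded-variation class $BV((0,1),H_0)$, verifies the entropy-integral condition via the covering-number bound $\mathcal{N}(BV((0,1),H_0),d_{L^2},\epsilon)\leq e^{K/\epsilon}$ and the quadratic growth condition $M^{(k)}(g,x)-M^{(k)}(f^{(k)},x)=\|f^{(k)}-g\|_2^2$, and then invokes Theorem~2 of \cite{mull:19:6}; your plug-in bound is essentially the paper's own intermediate step $n_k^{-1}\sum_i|s_{iG}^{(k)}(x)-s_i^{(k)}(x)|=o_p(1)$, sharpened to the rate $O_p(n_k^{-1/2})$. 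Your argument is more elementary and self-contained (no empirical-process machinery, no covering numbers), but it relies crucially on the linearity of the estimator after the quantile embedding into $L^2(0,1)$; the paper's M-estimation framing is the one that extends to responses in general metric spaces, which is the template reused for Lemma~\ref{lem:2} and Theorem~\ref{thm:1}. Two small points you should patch: identifying the Bochner mean of $\tilde{s}_{iG}^{(k)}(x)F^{-1}_{\nu_i^{(k)}}$ with the pointwise expectation $f^{(k)}(x)$ requires the joint measurability of $(\nu,u)\mapsto F_\nu^{-1}(u)$ (the paper's Lemma~\ref{lem:0} plus Fubini), and the bound on $\|\widehat{\Sigma}_k^{-1}-\Sigma_k^{-1}\|$ needs the event that $\gamma_1(\widehat{\Sigma}_k)$ stays bounded away from zero, which holds with probability tending to one by Weyl's inequality and the concentration of $\widehat{\Sigma}_k$.
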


To derive the convergence rate of $\widehat{f}(x)$, we rely on the following condition.

\begin{condition}\label{con:2}
There exist positive constants $C_1, C_2, C_3$, $C_4$ such that
\[\sum_{k=0}^K 2 e^{-C_1\frac{4}{R_3^2}n_k} + C_2e^{-C_3\big(\frac{2}{R_3}-\sqrt{\frac{C_4}{n_k}}\big)n_k}=o(1),\]
where $R_3$ is as in Condition~\ref{con:1}. In addition,
\[\frac{\sqrt{n_0+n_{\mathcal{A}}}}{\min_{1\leq k \leq K}n_k}=o(1),\quad\frac{\sqrt{n_0+n_{\mathcal{A}}}}{n_0}=o(1).\]
\end{condition}

\begin{remark}
These conditions are typically satisfied in practice as they are not overly restrictive. Condition~\ref{con:1} requires that covariates and covariance matrices are bounded in a specific way, which is standard in the transfer learning literature and generally holds in real-world scenarios \citep{cai:24:2}. In particular, this assumption is common when dealing with high-dimensional data where regularization is necessary \citep{li:22:1}.
Condition~\ref{con:2} assumes that the number of samples is significantly larger than the number of sources $K$, which is reasonable since $K$ is usually fixed in practical settings, and we often have sufficient source data compared to target data. In practice, Condition~\ref{con:2} may be slightly violated if there exists a source $k$ with a relatively small $n_k$, such that $\sqrt{n_0 + n_{\mathcal{A}}}/n_k$ is not $o(1)$. In such cases, the $k$th source can simply be excluded from Step 1 of the WaTL algorithm. 
\end{remark}

\begin{theorem}
\label{thm:1}
Suppose Conditions~\ref{con:1} and \ref{con:2} hold. Then, for the WaTL algorithm, it holds for a fixed $x \in \mathbb{R}^p$ that 
\[\|\widehat{f}(x)-f(x)\|_2=O_p\Big(\frac{\sum_{k=0}^K\sqrt{n_k}}{n_0+n_{\mathcal{A}}}+(n_0+n_{\mathcal{A}})^{-1/2}\Big),\]
where $f(x)=(n_0+n_{\mathcal{A}})^{-1}\sum_{k=0}^Kn_kf^{(k)}(x)$.
\end{theorem}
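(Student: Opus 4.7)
The plan is to decompose the aggregated error as a weighted sum of the per-source errors and then combine the bound from Lemma~\ref{lem:1} with a refinement that exploits the independence across all $n_0+n_{\mathcal{A}}$ samples. First I would write
\[
\widehat{f}(x)-f(x) \;=\; \frac{1}{n_0+n_{\mathcal{A}}}\sum_{k=0}^{K} n_k\bigl(\widehat{f}^{(k)}(x)-f^{(k)}(x)\bigr),
\]
apply the triangle inequality in $L^2(0,1)$, and invoke Lemma~\ref{lem:1} on each term to get
\[
\|\widehat{f}(x)-f(x)\|_2 \;\le\; \frac{1}{n_0+n_{\mathcal{A}}}\sum_{k=0}^{K} n_k\cdot O_p(n_k^{-1/2}) \;=\; O_p\!\left(\frac{\sum_{k=0}^{K}\sqrt{n_k}}{n_0+n_{\mathcal{A}}}\right),
\]
which already produces the first term in the stated rate.

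To also extract the $(n_0+n_{\mathcal{A}})^{-1/2}$ term, I would refine the decomposition within each source by writing
\[
s_{iG}^{(k)}(x)F_{\nu_i^{(k)}}^{-1} - f^{(k)}(x) \;=\; \underbrace{\bigl(s_{iG}^{(k)}(x)-\tilde s_{G}^{(k)}(X_i^{(k)})\bigr)F_{\nu_i^{(k)}}^{-1}}_{\text{plug-in part}} \;+\; \underbrace{\tilde s_{G}^{(k)}(X_i^{(k)})F_{\nu_i^{(k)}}^{-1} - f^{(k)}(x)}_{\text{oracle part}},
\]
where $\tilde s_G^{(k)}(X_i^{(k)})=1+(X_i^{(k)}-\theta_k)^{\T}\Sigma_k^{-1}(x-\theta_k)$ uses the population moments. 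The oracle parts, across all sources and samples, form a sum of $n_0+n_{\mathcal{A}}$ independent mean-zero $L^2(0,1)$-valued random variables; Condition~\ref{con:1} (sub-Gaussian covariates and bounded support of $\nu^{(k)}$) bounds their second moments uniformly, so a Chebyshev argument in the Hilbert space $L^2(0,1)$ yields the oracle contribution at rate $O_p((n_0+n_{\mathcal{A}})^{-1/2})$.

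The hard part will be controlling the plug-in part, because $s_{iG}^{(k)}(x)$ depends on the full $k$-th sample through $\overline{X}_k$ and $\widehat{\Sigma}_k$, breaking within-source independence. I would expand the difference $s_{iG}^{(k)}(x)-\tilde s_G^{(k)}(X_i^{(k)})$ linearly in $\overline{X}_k-\theta_k$ and $\widehat{\Sigma}_k^{-1}-\Sigma_k^{-1}$, then use the sub-Gaussian/bounded-spectrum conditions (Condition~\ref{con:1}) to obtain the concentration rates $\|\overline{X}_k-\theta_k\|=O_p(n_k^{-1/2})$ and $\|\widehat{\Sigma}_k-\Sigma_k\|=O_p(n_k^{-1/2})$. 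The exponential tail conditions in Condition~\ref{con:2} guarantee that $\widehat{\Sigma}_k^{-1}$ exists and is uniformly bounded with high probability, while the rate conditions $\sqrt{n_0+n_{\mathcal{A}}}/n_k=o(1)$ and $\sqrt{n_0+n_{\mathcal{A}}}/n_0=o(1)$ ensure that after aggregation the plug-in cross-terms remain $O_p(\sum_{k=0}^K\sqrt{n_k}/(n_0+n_{\mathcal{A}}))$.

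Finally, combining the oracle bound and the plug-in bound through the triangle inequality produces the asserted rate
\[
\|\widehat{f}(x)-f(x)\|_2 \;=\; O_p\!\Bigl(\tfrac{\sum_{k=0}^K\sqrt{n_k}}{n_0+n_{\mathcal{A}}}+(n_0+n_{\mathcal{A}})^{-1/2}\Bigr),
\]
with the first term coming from the plug-in analysis and the second from the pooled CLT-type bound on the oracle remainder.
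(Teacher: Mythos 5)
Your proposal is correct, but it reaches the result by a genuinely different and more elementary route than the paper. The paper proves Theorem~\ref{thm:1} by viewing $\widehat{f}(x)$ as an M-estimator over the metric space $BV((0,1),H_2)$ equipped with $d_{L^2}$ and running a general empirical-process argument (Lemma~\ref{lem:2}): covering-number bounds, a peeling argument, and explicit control of the weight perturbations $s_{iG}^{(k)}(x)-s_i^{(k)}(x)$ via concentration of $\overline{X}_k$ and $\widehat{\Sigma}_k$. That machinery is what makes the result portable to metric spaces where the weighted Fr\'echet mean has no closed form, which is the generality the paper advertises. You instead exploit the Hilbert structure of $L^2(0,1)$: since $\widehat{f}(x)=(n_0+n_{\mathcal{A}})^{-1}\sum_{k=0}^K n_k\widehat{f}^{(k)}(x)$ exactly, the triangle inequality together with Lemma~\ref{lem:1} (whose implicit constants are uniform in $k$ under Condition~\ref{con:1}, and $K$ is fixed) already gives $\|\widehat{f}(x)-f(x)\|_2=O_p\bigl(\sum_{k=0}^K\sqrt{n_k}/(n_0+n_{\mathcal{A}})\bigr)$, and this alone implies the theorem: by Cauchy--Schwarz, $\sum_{k=0}^K\sqrt{n_k}\geq\sqrt{n_0+n_{\mathcal{A}}}$, so the $(n_0+n_{\mathcal{A}})^{-1/2}$ term in the stated rate is always dominated by the first term and there is nothing to ``extract'' --- adding a nonnegative term only weakens an upper bound. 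Your oracle/plug-in refinement is therefore superfluous for the statement as written; it is also the only place where your argument is merely sketched (the claim that the plug-in cross-terms aggregate to the stated order is asserted rather than derived), but since the theorem does not depend on it, the proof stands. Two cautions: first, your two-line argument does not generalize to non-Hilbert response spaces, which is precisely what the paper's Lemma~\ref{lem:2} is built for; second, if $K$ were allowed to grow (the paper's proof of Lemma~\ref{lem:2} invokes $K=o(n_0+n_{\mathcal{A}})$), summing $K+1$ separate $O_p$ statements would require uniform tail control across $k$, which Condition~\ref{con:1} supplies but which you should invoke explicitly.
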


The proof of Theorem~\ref{thm:1} involves a detailed analysis of the sample covariance matrix and leverages M-estimation techniques within the framework of empirical process theory. The result extends beyond responses in the Wasserstein space, applying to other metric spaces that meet mild regularity conditions. Consequently, Theorem~\ref{thm:1} provides a versatile framework that can be applied to transfer learning in regression models with responses such as networks \citep{mull:22:11}, symmetric positive-definite matrices \citep{penn:06}, or trees \citep{bill:01}. The following theorem establishes the convergence rate for the estimated regression function $\widehat{m}_G^{(0)}(x)$ in the WaTL algorithm.

\begin{theorem}
\label{thm:2}
Assume Conditions~\ref{con:1} and \ref{con:2} hold and the regularization parameter satisfies $\lambda \asymp n_0^{-1/2+\epsilon}$ for some $\epsilon>0$. Then, for the WaTL algorithm and a fixed $x \in \mathbb{R}^p$, it holds that 
\[d_{\mathcal{W}}^2(\widehat{m}_G^{(0)}(x),m_G^{(0)}(x))= O_p\Big(n_0^{-1/2+\epsilon}\big(\psi+\frac{\sum_{k=0}^K\sqrt{n_k}}{n_0+n_{\mathcal{A}}}+(n_0+n_{\mathcal{A}})^{-1/2}\big)\Big),\]
where $\psi=\max_{1\leq k\leq K}\|f^{(0)}(x)-f^{(k)}(x)\|_2$ quantifies the maximum discrepancy between the target and source.
\end{theorem}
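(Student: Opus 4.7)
The plan is to reduce the squared Wasserstein error to an $L^2$ error on the quantile-function side and then derive the rate for that $L^2$ error by combining the optimality of the penalized estimator $\widehat{f}_0(x)$ with Lemma~\ref{lem:1} and Theorem~\ref{thm:1}. First I would identify $m_G^{(0)}(x)$ as the measure whose quantile function is the orthogonal $L^2$-projection of $f^{(0)}(x)=E\{s_G^{(0)}(x)F_{\nu^{(0)}}^{-1}\}$ onto the closed convex subset $\mathcal{Q}\subset L^2(0,1)$ consisting of quantile functions of measures in $\mathcal{W}$. Expanding $E\{s_G^{(0)}(x)d_{\mathcal{W}}^2(\nu^{(0)},\mu)\}$ via \eqref{eq:dw} and using $E\{s_G^{(0)}(x)\}=1$ turns the population criterion into $\|F_\mu^{-1}-f^{(0)}(x)\|_2^2$ up to a $\mu$-independent constant, so $F^{-1}_{m_G^{(0)}(x)}$ is indeed such a projection. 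Since Step~3 of WaTL defines $\widehat{m}_G^{(0)}(x)$ as the analogous projection of $\widehat{f}_0(x)$, the fact that projection onto a closed convex set in a Hilbert space is 1-Lipschitz yields $d_{\mathcal{W}}^2(\widehat{m}_G^{(0)}(x),m_G^{(0)}(x))\le\|\widehat{f}_0(x)-f^{(0)}(x)\|_2^2$.

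The core of the argument is controlling $\|\widehat{f}_0(x)-f^{(0)}(x)\|_2$. Using the identity $n_0^{-1}\sum_i s_{iG}^{(0)}(x)=1$, which follows from $\sum_i(X_i^{(0)}-\overline X_0)=0$, the Step~2 objective collapses, up to an additive constant in $g$, to $\|g-\widehat{f}^{(0)}(x)\|_2^2+\lambda\|g-\widehat{f}(x)\|_2$. Comparing $\widehat{f}_0(x)$ with the feasible point $g=f^{(0)}(x)$ then yields the basic inequality
\[\|\widehat{f}_0(x)-\widehat{f}^{(0)}(x)\|_2^2+\lambda\|\widehat{f}_0(x)-\widehat{f}(x)\|_2\le\|\widehat{f}^{(0)}(x)-f^{(0)}(x)\|_2^2+\lambda\|f^{(0)}(x)-\widehat{f}(x)\|_2.\]
Dropping the nonnegative penalty term on the left and invoking $\|\widehat{f}_0-f^{(0)}\|_2^2\le 2\|\widehat{f}_0-\widehat{f}^{(0)}\|_2^2+2\|\widehat{f}^{(0)}-f^{(0)}\|_2^2$ reduces the task to bounding three quantities: $\|\widehat{f}^{(0)}(x)-f^{(0)}(x)\|_2=O_p(n_0^{-1/2})$ by Lemma~\ref{lem:1}; $\|\widehat{f}(x)-f(x)\|_2$ via Theorem~\ref{thm:1}; and the deterministic aggregation bias $\|f(x)-f^{(0)}(x)\|_2$, which I would bound by $\psi$ using the identity $f(x)-f^{(0)}(x)=(n_0+n_{\mathcal{A}})^{-1}\sum_{k=1}^K n_k\{f^{(k)}(x)-f^{(0)}(x)\}$ together with $\|f^{(k)}(x)-f^{(0)}(x)\|_2\le\psi$. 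Substituting these bounds and taking $\lambda\asymp n_0^{-1/2+\epsilon}$ then yields the claimed rate.

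The main obstacle will be keeping track of the $O_p(n_0^{-1})$ remainder arising from the squared target-only error $\|\widehat{f}^{(0)}(x)-f^{(0)}(x)\|_2^2$ and verifying that it is absorbed by the leading $n_0^{-1/2+\epsilon}\{(n_0+n_{\mathcal{A}})^{-1/2}\}$ term under the sample-size balance imposed by Condition~\ref{con:2}; relatedly, the choice $\lambda\asymp n_0^{-1/2+\epsilon}$ must be justified as the one that puts $\lambda\psi$, $\lambda$ times the Theorem~\ref{thm:1} rate, and the squared target-only error at comparable orders so that none of them dominates. A secondary technical point is confirming that the Step~3 projection is well-defined, unique, and 1-Lipschitz; this reduces to checking that $\mathcal{Q}$ is closed and convex in $L^2(0,1)$, a standard fact for the set of nondecreasing left-continuous square-integrable functions.
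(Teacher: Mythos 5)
Your overall architecture matches the paper's: reduce to $\|\widehat{f}_0(x)-f^{(0)}(x)\|_2^2$ via the non-expansiveness of the metric projection onto the closed convex set of quantile functions, exploit the optimality of $\widehat{f}_0$ in Step~2, bound the aggregation bias $\|f(x)-f^{(0)}(x)\|_2\le\psi$ by convexity of the weights, and invoke Lemma~\ref{lem:1} and Theorem~\ref{thm:1}. However, there is a genuine gap in the middle step, and it is exactly the one you flagged as the ``main obstacle'': the $O_p(n_0^{-1})$ remainder is \emph{not} absorbed by the claimed rate under Condition~\ref{con:2}. Your basic inequality gives $\|\widehat{f}_0-\widehat{f}^{(0)}\|_2^2\le\|\widehat{f}^{(0)}-f^{(0)}\|_2^2+\lambda\|f^{(0)}-\widehat{f}\|_2$ after dropping the penalty, and the crude bound $\|\widehat{f}_0-f^{(0)}\|_2^2\le 2\|\widehat{f}_0-\widehat{f}^{(0)}\|_2^2+2\|\widehat{f}^{(0)}-f^{(0)}\|_2^2$ then leaves a term $4\|\widehat{f}^{(0)}-f^{(0)}\|_2^2=O_p(n_0^{-1})$. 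The claimed bound, with $\psi=0$, is of order $n_0^{-1/2+\epsilon}\bigl(\tfrac{\sum_k\sqrt{n_k}}{n_0+n_{\mathcal{A}}}+(n_0+n_{\mathcal{A}})^{-1/2}\bigr)$, which is $o(n_0^{-1})$ whenever $n_{\mathcal{A}}\gg n_0^{1+2\epsilon}$; Condition~\ref{con:2} only requires $n_{\mathcal{A}}=o(n_0^{2})$, so this regime is permitted (e.g.\ $n_{\mathcal{A}}\asymp n_0^{1.8}$ with small $\epsilon$). In that regime your argument proves only an $O_p(n_0^{-1})$ bound, strictly weaker than the theorem.

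The repair is to \emph{not} drop the penalty term and to avoid ever squaring the target-only fluctuation. The paper works on the event $E_n=\{\|\widehat{f}^{(0)}(x)-f^{(0)}(x)\|_2\le\lambda/2\}$, which has probability tending to one by Lemma~\ref{lem:1} precisely because $\lambda\asymp n_0^{-1/2+\epsilon}$. Writing the error expansion so that the target fluctuation enters only through the cross term $2\langle\widehat{f}^{(0)}-f^{(0)},\widehat{f}_0-f^{(0)}\rangle_2\le 2\|\widehat{f}^{(0)}-f^{(0)}\|_2\,\|\widehat{f}_0-f^{(0)}\|_2\le\lambda\|\widehat{f}_0-f^{(0)}\|_2$ on $E_n$, one then cancels this linear term against the retained penalty via the triangle inequality $\|\widehat{f}_0-f^{(0)}\|_2\le\|\widehat{f}_0-\widehat{f}\|_2+\|\widehat{f}-f^{(0)}\|_2$: the $+\lambda\|\widehat{f}_0-\widehat{f}\|_2$ is killed by the $-\lambda\|\widehat{f}_0-\widehat{f}\|_2$ coming from optimality, leaving the clean bound
\begin{equation*}
\|\widehat{f}_0-f^{(0)}\|_2^2\le 2\lambda\|\widehat{f}-f^{(0)}\|_2\le 2\lambda\bigl(\psi+\|\widehat{f}-f\|_2\bigr),
\end{equation*}
with no $n_0^{-1}$ or $\lambda^2$ remainder. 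From there your final substitution of Theorem~\ref{thm:1} and the choice of $\lambda$ goes through verbatim. The rest of your proposal (the identity $n_0^{-1}\sum_i s_{iG}^{(0)}(x)=1$, the characterization of $m_G^{(0)}(x)$ as a projection, and the bias bound by $\psi$) is correct and consistent with the paper.
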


Theorem~\ref{thm:2} can be compared to the convergence rate of global \f regression \citep{mull:19:6} applied solely to the target data, for which the rate is $d_{\mathcal{W}}(\widehat{m}_G^{(0)}(x),m_G^{(0)}(x))=O_p(n_0^{-1/2})$. The WaTL algorithm achieves a faster convergence rate when there are sufficient source data and the auxiliary sources are informative enough, satisfying $\psi\lesssim n_0^{-1/2-\epsilon}$. This result highlights that knowledge transfer from auxiliary samples can significantly enhance the learning performance of the target model, provided the auxiliary sources are closely aligned with the target data.

For the AWaTL algorithm, we require the following condition.

\begin{condition}\label{con:3}
The regularization parameter satisfies $\lambda \asymp n_0^{-1/2+\epsilon}$ for some $\epsilon>0$ and for some $\epsilon'>\epsilon$, there exists a non-empty subset $\mathcal{A} \subset\{1, \ldots, K\}$ such that
\[\frac{n_*^{-1/2}+n_0^{-1/2}}{\min_{k\in\mathcal{A}^C}\psi_k}=o(1),\quad\frac{\max_{k\in \mathcal{A}}\psi_k}{n_*^{-1/2}+n_0^{-1/2-\epsilon'}}=O(1),\]
where $\mathcal{A}^C=\{1, \ldots, K\}\symbol{92}\mathcal{A}$, $n_*=\min_{1\leq k\leq K}n_k$ and $\psi_k=\|f^{(0)}(x)-f^{(k)}(x)\|_2$.
\end{condition}
\begin{remark}
    We allow $\mathcal{A}$ to be the whole set $\{1, \ldots, K\}$, in which case the condition becomes
    \[\frac{\max_{1\leq k\leq K}\psi_k}{n_*^{-1/2}+n_0^{-1/2-\epsilon'}}=O(1).\]
    Besides, if $\mathcal{A}$ exists, then it is unique since
    \begin{align*}
        &\{1\leq k\leq K:\frac{n_*^{-1/2}+n_0^{-1/2}}{\psi_k}=o(1)\}\cap \{1\leq k\leq K:\frac{\psi_k}{n_*^{-1/2}+n_0^{-1/2-\epsilon'}}=O(1)\}=\emptyset.
    \end{align*}
\end{remark}
Condition~\ref{con:3} ensures that the source datasets can be effectively partitioned into two groups: informative ones and those sufficiently different from the target data. This separation guarantees that the informative set can be accurately identified. Under this condition, by setting the number of informative sources to $L=|\mathcal{A}|$, we establish the following rate of convergence for the AWaTL algorithm. In practice, $L$ can be decided by cross-validation. 

\begin{theorem}\label{thm:3}
Under Condition \ref{con:3} and the conditions of Theorem \ref{thm:2}, for the AWaTL algorithm with a fixed number of sources $K$ we have   $P(\widehat{\mathcal{A}}=\mathcal{A})\rightarrow 1$
and 
\[d_{\mathcal{W}}^2(\widehat{m}_G^{(0)}(x),m_G^{(0)}(x))=O_p\Big(n_0^{-1/2+\epsilon}\big(\max_{k\in \mathcal{A}}\psi_k+\frac{\sum_{k\in \mathcal{A}\cup \{0\}}\sqrt{n_k}}{\sum_{k\in \mathcal{A}\cup\{0\}}n_k}+(\sum_{k\in \mathcal{A}\cup\{0\}}n_k)^{-1/2}\big)\Big).\]
\end{theorem}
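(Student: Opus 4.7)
The plan is to decouple the argument into two stages. First, I would establish consistent recovery $P(\widehat{\mathcal{A}} = \mathcal{A}) \to 1$ of the informative subset from the empirical discrepancy scores $\widehat{\psi}_k$. Once this is in hand, on the event $\{\widehat{\mathcal{A}} = \mathcal{A}\}$ Algorithm~\ref{alg:2} is literally an instance of Algorithm~\ref{alg:1} run on the restricted source family $\mathcal{A}\cup\{0\}$, and the rate statement follows by directly reusing Theorem~\ref{thm:2} with $\{1,\ldots,K\}$ replaced by $\mathcal{A}$.

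For the identification step, the key observation is that $\widehat{\psi}_k$ concentrates around $\psi_k$. The triangle inequality gives $|\widehat{\psi}_k - \psi_k| \leq \|\widehat{f}^{(0)}(x) - f^{(0)}(x)\|_2 + \|\widehat{f}^{(k)}(x) - f^{(k)}(x)\|_2$, which Lemma~\ref{lem:1} turns into $|\widehat{\psi}_k - \psi_k| = O_p(n_0^{-1/2} + n_k^{-1/2})$; uniformity over $k$ is automatic because $K$ is fixed, so a finite union bound suffices. Condition~\ref{con:3} then produces two cleanly separated regimes: for $k \in \mathcal{A}$ one has $\psi_k = O(n_*^{-1/2} + n_0^{-1/2-\epsilon'})$ and hence $\widehat{\psi}_k = O_p(n_*^{-1/2} + n_0^{-1/2})$, whereas for $k \in \mathcal{A}^C$ one has $\psi_k \gg n_*^{-1/2} + n_0^{-1/2}$ so that $\widehat{\psi}_k/\psi_k \to 1$ in probability. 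A union bound over the at most $|\mathcal{A}|\cdot|\mathcal{A}^C|$ pairwise comparisons then shows that with probability tending to one every score in $\mathcal{A}$ is strictly smaller than every score in $\mathcal{A}^C$, so selecting the $L = |\mathcal{A}|$ smallest indices recovers $\mathcal{A}$ exactly.

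For the rate step, on the event $\{\widehat{\mathcal{A}} = \mathcal{A}\}$ the weighted auxiliary estimator, bias correction, and projection of Algorithm~\ref{alg:2} are identical to those of Algorithm~\ref{alg:1} run on $\{(X_i^{(k)}, \nu_i^{(k)}): k \in \mathcal{A}\cup\{0\}\}$. Conditions~\ref{con:1} and~\ref{con:2} remain valid for this subsystem since $|\mathcal{A}| \leq K$ is still fixed and dropping terms only decreases the sums in Condition~\ref{con:2}, so the proof of Theorem~\ref{thm:2} carries over verbatim after substituting $\sum_{k=0}^{K}$ by $\sum_{k \in \mathcal{A} \cup \{0\}}$ and $\psi$ by $\max_{k \in \mathcal{A}}\psi_k$. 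Because the event $\{\widehat{\mathcal{A}} = \mathcal{A}\}$ has probability tending to one, the conditional $O_p$ bound lifts to the unconditional one in the statement.

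The main obstacle is the separation argument in the first step: one must show the gap between $\min_{k \in \mathcal{A}^C}\psi_k$ and $\max_{k \in \mathcal{A}}\psi_k$ strictly dominates the estimation error uniformly in $k$. The two ratio conditions in Condition~\ref{con:3} are designed precisely to yield this asymptotic gap, and the pointwise bounds of Lemma~\ref{lem:1} can be promoted to a single high-probability statement by a union bound only because $K$ is held fixed; once this is accepted, everything else in the proof is a reapplication of already-established machinery.
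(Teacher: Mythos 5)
Your proposal is correct and follows essentially the same route as the paper: both establish $P(\widehat{\mathcal{A}}=\mathcal{A})\to 1$ by combining the concentration $|\widehat{\psi}_k-\psi_k|=O_p(n_k^{-1/2}+n_0^{-1/2})$ from Lemma~\ref{lem:1} with the gap guaranteed by Condition~\ref{con:3} and a finite union bound over the fixed number of sources, and then obtain the rate by invoking Theorem~\ref{thm:2} with $\{1,\ldots,K\}$ replaced by $\mathcal{A}$ on the high-probability event $\{\widehat{\mathcal{A}}=\mathcal{A}\}$. Your added remark that Condition~\ref{con:2} is preserved under restriction to $\mathcal{A}$ is a point the paper leaves implicit but is correct.
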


The convergence rate of the AWaTL algorithm simplifies to $n_0^{-1-\epsilon'+\epsilon}$ when the informative source data is sufficiently large. This rate surpasses that of global \f regression \citep{mull:19:6} applied solely to the target data, offering a theoretical guarantee that AWaTL effectively mitigates negative transfer by selectively integrating relevant auxiliary information.

While the above analysis assumes that each probability distribution is fully observed, an assumption commonly adopted in the distributional data literature \citep{mull:19:6, chen:23:1}, real-world applications often provide only independent samples drawn from underlying distributions. In such cases, this limitation can be overcome by replacing unobservable distributions with their empirical counterparts, constructed from sample observations \citep{zhou:24:1}. Additional details and theoretical justification for this extension are provided in Appendix~\ref{app:em}.

\section{Numerical Experiments}\label{sec:5}
In this section, we evaluate the performance of the proposed WaTL algorithm, alongside two baseline approaches: the global \f regression using only target data (Only Target) and using only source data (Only Source). Consider $K = 5$ source sites. The data are  generated as follows. For the target population, we sample $X^{(0)} \sim {\rm U}(0,1)$ and generate the response distribution, represented by its quantile function, as 
\[F^{-1}_{\nu^{(0)}}(u)=w^{(0)}(1-u)u+(1-X^{(0)})u + X^{(0)}  F^{-1}_{Z^{(0)}} (u),\quad u\in (0,1),\]
where $Z^{(0)} \sim N(0.5,1)|_{(0,1)}$ and $w^{(0)} \sim N(0,1)|_{(-0.5,0.5)}$. Here, $N(\mu,\sigma^2)|_{(a,b)}$ denotes a normal distribution with mean $\mu$ and variance $\sigma^2$, truncated to the interval $(a,b)$. For source populations, we define $\psi_k = 0.1k$ for $k=1,\ldots,K$, and generate $X^{(k)} \sim {\rm U}(0,1)$. The corresponding response distribution is generated as 
\[F^{-1}_{\nu^{(k)}}(u) = w^{(k)} (1-u)u + (1-X^{(k)}) u + X^{(k)} F^{-1}_{Z^{(k)}}(u),\quad u\in (0,1),\]
where $Z^{(k)} \sim N(0.5,1-\psi_k)|_{(0,1)}$ and $w^{(k)} \sim N(0,1)|_{(-0.5,0.5)}$. Consequently, for each predictor $x$,  the true regression function is $m_G^{(k)}(x)= (1-x) u + x F^{-1}_{Z^{(k)}}(u)$, for $k = 0,1,\ldots,K$.

We vary the target sample size $n_0$ from $200$ to $800$, while the source sample size is set as $n_k = k \tau$, where $\tau \in \{100,200\}$ and $k=1,\ldots,K$. The regularization parameter $\lambda$ in Algorithm~\ref{alg:1} is selected via five-fold cross-validation, ranging from $0$ to $3$ in increments of $0.1$. To evaluate performance, we sample $100$ predictors uniformly from the target distribution. Using Algorithm~\ref{alg:1}, we compute $\widehat{m}_{G}^{(0)}(x)$ and compare it with the corresponding estimates obtained from global \f regression using only target or source data. Performance is assessed using the root mean squared prediction risk $\mathrm{RMSPR} = \sqrt{\frac{1}{100} \sum_{i=1}^{100} d_{\mathcal{W}}^2\big(\widehat{m}_{G}^{(0)}(x_i), {m}_{G}^{(0)} (x_i)\big)}$, where $x_i$ denotes the sampled predictor, $\widehat{m}_{G}^{(0)}(x_i)$ is the estimated function, and ${m}_{G}^{(0)} (x_i)$ represents the ground truth. To ensure robustness, we repeat the simulation $50$ times and report the average RMSPR, as shown in Figure~\ref{Simulation}(a).

As shown in Figure~\ref{Simulation}(a), WaTL consistently outperforms global \f regression trained solely on target or source data. When the target sample size $n_0$ is small, the Only Target method exhibits a high RMSPR due to the instability of models trained on limited data. In contrast, WaTL significantly reduces RMSPR by effectively incorporating auxiliary information from the source domain. As $n_0$ increases, the performance of Only Target improves and gradually approaches that of WaTL, which is expected as larger sample sizes lead to more stable and accurate estimators. Nevertheless, WaTL maintains a consistent advantage across all $n_0$, suggesting that it leverages complementary information from the source. The performance of the Only Source estimator remains nearly unchanged across different $n_0$ values, as it does not benefit from additional target data.

Comparing the two panels of Figure~\ref{Simulation}(a), we also observe that WaTL improves as the source sample size increases, confirming its ability to effectively integrate information from source domains. This demonstrates the benefit of multi-source transfer learning, where WaTL balances knowledge from both target and source domains to achieve improved prediction.

To better understand when negative transfer may occur, we conduct an ablation study with $K=1$ source and vary the similarity parameter $\psi_1$ from $0.01$ to $1$ in increments of $0.01$, with $n_0 = 100$ and $n_1 = 200$. Our method outperforms the Only Target approach when $\psi_1 < 0.9$, while for $\psi_1 \geq 0.9$, the Only Target method becomes preferable. This confirms that negative transfer arises when the source is too dissimilar to the target.

We further evaluate the effectiveness of AWaTL in selecting informative sources. In this experiment, we set $L = 2$ and $n_k = 100$ for $k = 0, \ldots, 5$. The similarity parameters are specified as $\psi_k = 0.1$ for $k = 1, 2$ (informative sources), and $\psi_k = \psi$, increasing from $0.2$ to $1$ in increments of $0.1$, for $k = 3, 4, 5$ (uninformative sources). Each configuration is repeated $100$ times, and the corresponding selection rates are reported in Figure~\ref{Simulation}(b). The results show that AWaTL successfully identifies informative sources, with selection rates for sources $1$ and $2$ rapidly increasing and reaching perfect accuracy once $\psi > 0.6$. These findings demonstrate the robustness of AWaTL in distinguishing useful sources under varying similarity levels. 

\begin{figure}[htbp]
    \centering
    \begin{subfigure}{0.48\textwidth}
        \centering
        \includegraphics[width=\linewidth]{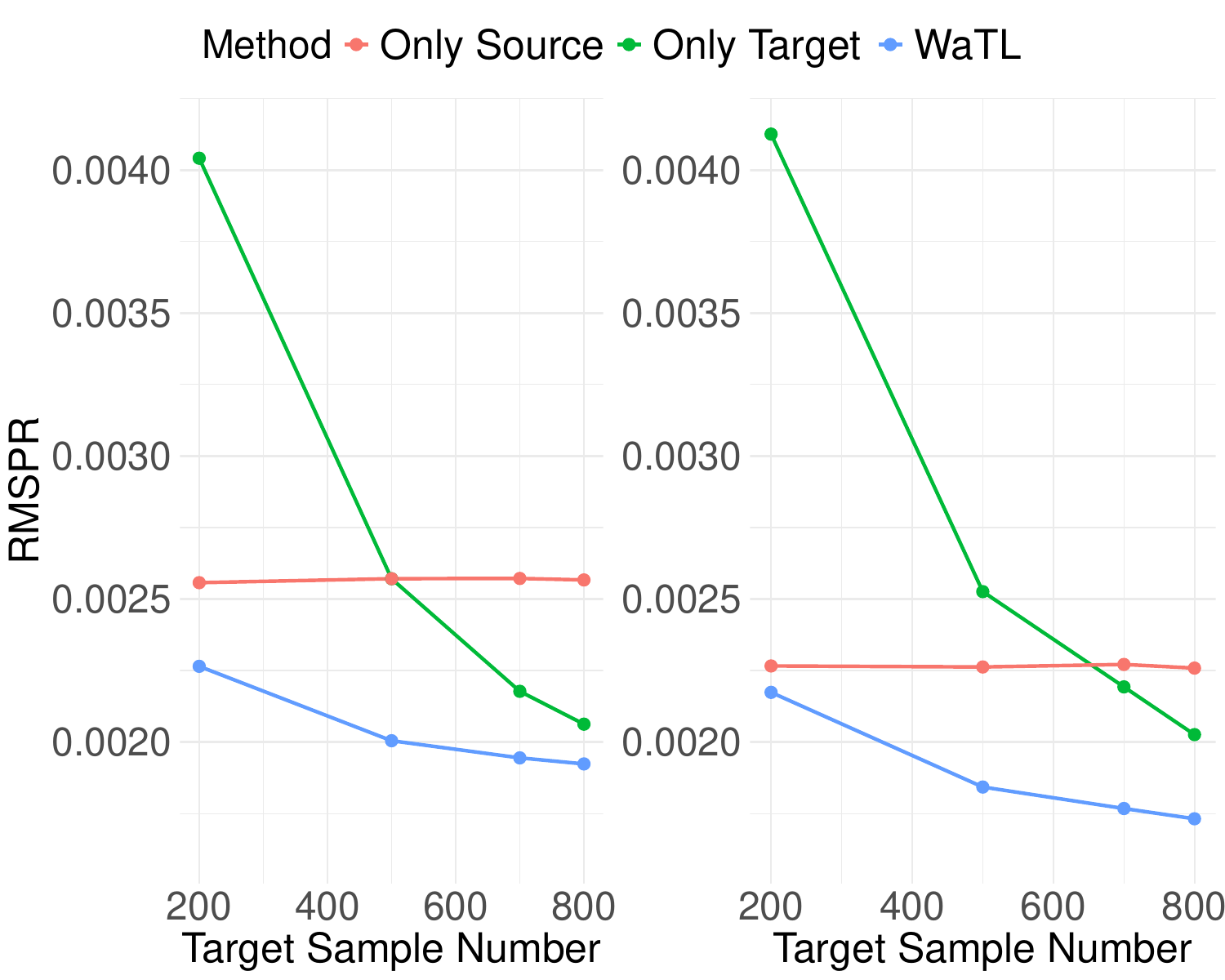}
        \caption{}
    \end{subfigure}
    \hfill
    \begin{subfigure}{0.48\textwidth}
        \centering
        \includegraphics[width=\linewidth]{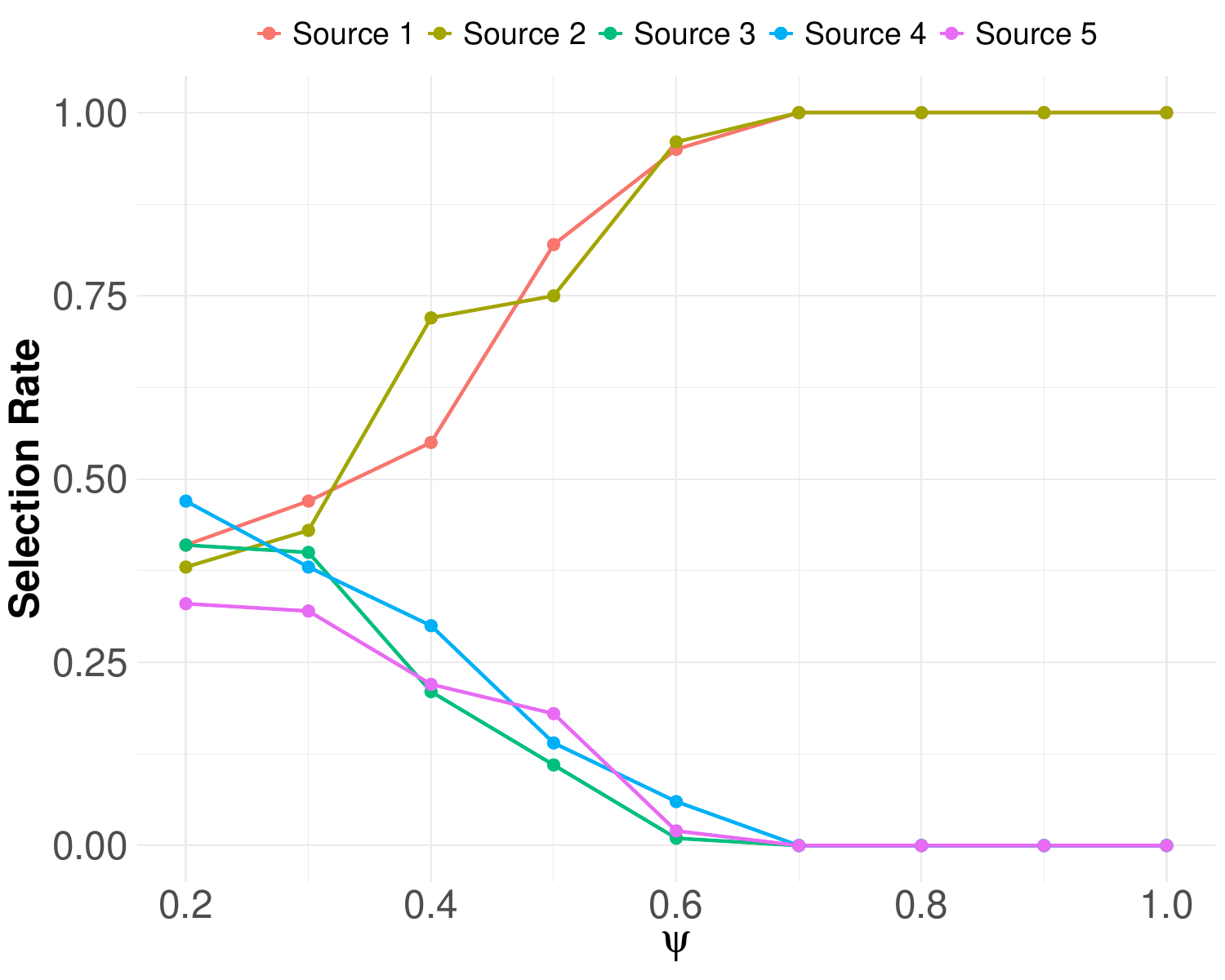}
        \caption{}
    \end{subfigure}
    \vskip 0.05in
    \caption{(a) Root mean squared prediction risk (RMSPR) of WaTL, only Source, and Only Target methods under varying target sample sizes, with source sample sizes $\tau = 100$ (left) and $\tau = 200$ (right); (b) Selection rate of each source site as $\psi$ increases.}
    \label{Simulation}
    \vskip -0.1in
\end{figure}

\section{Real-world Applications}\label{sec:6}

We evaluate the WaTL algorithm using data from the National Health and Nutrition Examination Survey (NHANES) 2005--2006\footnote{\url{https://wwwn.cdc.gov/nchs/nhanes/ContinuousNhanes/Default.aspx? BeginYear=2005}}, focusing on modeling the distribution of physical activity intensity. NHANES is a large-scale health survey in the United States that combines interviews with physical examinations to assess the health and nutrition of both adults and children. The dataset includes extensive demographic, socioeconomic, dietary, and medical assessments, providing a comprehensive resource for health-related research.

During the 2005--2006 NHANES cycle, participants aged 6 and older wore an ActiGraph 7164 accelerometer on their right hip for seven days, recording physical activity intensity in 1-minute epochs. Participants were instructed to remove the device during water-based activities and sleep. The device measured counts per minute (CPM), ranging from $0$ to $32767$, capturing variations in activity levels throughout the monitoring period.

Since female and male participants exhibit distinct physical activity patterns \citep{fera:24}, we analyze them separately. Physical activity intensity is influenced by multiple factors, and we consider body mass index (BMI) and age as key predictors \citep{klei:12, clev:23}. To accommodate potential nonlinear relationships, we implement local \f regression within the WaTL algorithm, treating the distribution of physical activity intensity as the response and BMI and age as predictors.

Following the data preprocessing steps in \cite{lin:23}, we remove unreliable observations per NHANES protocols. For each participant, we exclude activity counts above 1000 CPM or equal to zero (as zeros may correspond to various low-activity states such as sleep or swimming). Participants with fewer than $100$ valid observations or missing BMI, age, or gender information are also excluded. The remaining activity counts over seven days are concatenated to form the distribution of each participant's activity intensity.

To evaluate the WaTL, we set White, Mexican Americans, and other Hispanic individuals as sources and Black as the target. For females, the source data include $1308$ White people, $884$ Mexican Americans, and $108$ Other Hispanic individuals. For males, the source data include $1232$ White participants, $805$ Mexican Americans, and $92$ Other Hispanic individuals. We set $200$ Black participants as the target data for both genders. During evaluating, we perform five-fold cross-validation, using four folds for training and one for testing, cycling through each fold. For comparison, we also apply local \f regression using only the target data. The results, summarized in Figure~\ref{RealData}(a), show that WaTL improves performance over local \f regression for both females and males, demonstrating its ability to leverage information from other demographic groups to enhance modeling for Black participants.

\begin{figure}[htbp]
    \centering
    \begin{subfigure}{0.48\textwidth}
        \centering
        \includegraphics[width=\linewidth]{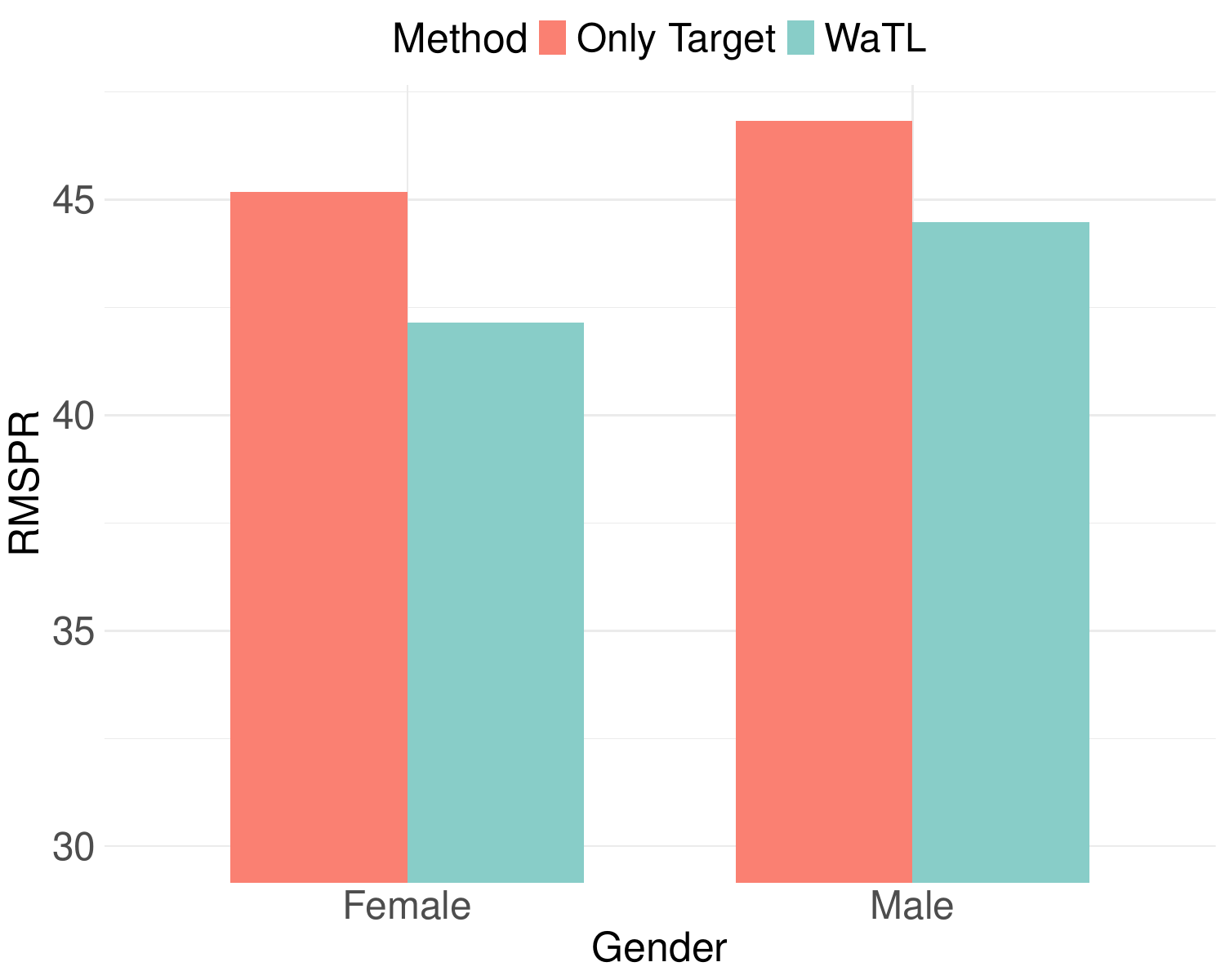}
        \caption{}
    \end{subfigure}
    \hfill
    \begin{subfigure}{0.48\textwidth}
        \centering
        \includegraphics[width=\linewidth]{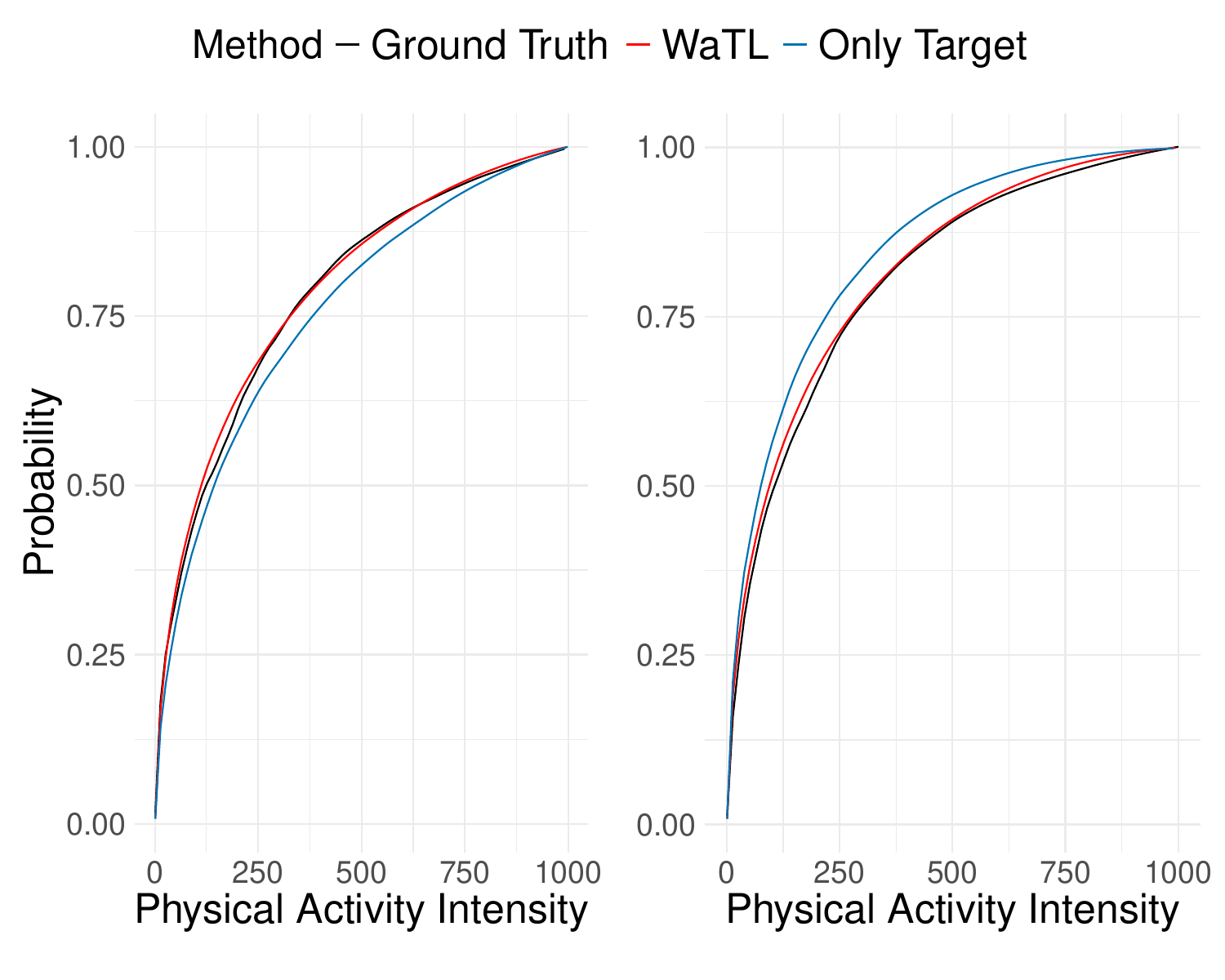}
        \caption{}
    \end{subfigure}
    \vskip 0.05in
    \caption{(a) Root mean squared prediction risk (RMSPR) of WaTL and Only Target methods for females and males, evaluated using five-fold cross-validation; (b) Cumulative distribution function of physical activity levels for one selected female (left) and one selected male (right), along with estimates from WaTL and Only Target methods.}
    \label{RealData}
    \vskip -0.1in
\end{figure}

To further illustrate the effectiveness of WaTL, we visualize the cumulative distribution function of physical activity levels for one female participant and one male participant in Figure~\ref{RealData}(b), along with estimates from WaTL and local \f regression, using only the target data. The results indicate that WaTL provides a better fit to the true distribution, outperforming the estimate obtained using only the target data. We further evaluate the robustness of WaTL on a human mortality dataset in Appendix~\ref{sec:human}, where WaTL continues to demonstrate strong performance.

\section{Conclusion}\label{sec:7}
We introduce Wasserstein transfer learning and its adaptive variant for scenarios where the informative set is unknown, addressing the challenges posed by the lack of linear operations in the Wasserstein space. By leveraging the Wasserstein metric, the proposed algorithm accounts for the non-Euclidean structure of distributional outputs, ensuring compatibility with the intrinsic geometry of the Wasserstein space. Supported by rigorous theoretical guarantees, the framework demonstrates improved estimation performance compared to methods that rely solely on target data.

This paper focuses on univariate distributions, which arise frequently in real-world applications such as the NHANES and human mortality studies discussed in Section \ref{sec:6}. The proposed framework, however, is not limited to the univariate case and can be extended to multivariate distributions by incorporating metrics such as the Sinkhorn \citep{cutu:13} or sliced Wasserstein distance \citep{bonn:15}. Extending the methodology to higher dimensions is an exciting future direction that entails addressing both computational and theoretical challenges specific to high-dimensional Wasserstein spaces.

\section*{Acknowledgments}
We thank the Area Chair and the reviewers for their constructive feedback. Doudou Zhou was supported by the NUS Start-Up Grant (A-0009985-00-00) 
and the MOE AcRF Tier 1 Grant (A-8003569-00-00).

\bibliography{bibliography}

\clearpage

\section{Human Mortality Data}
\label{sec:human}
We further assess WaTL using the age-at-death distributions from 162 countries in 2015, compiled from the United Nations Databases\footnote{\url{https://data.un.org/}} and the UN World Population Prospects 2022\footnote{\url{https://population.un.org/wpp/Download}}. The dataset provides country-level age-specific death counts, which we convert into smooth age-at-death densities using local linear smoothing; see Figure~\ref{HumanMortality} (a) for an illustration. For this analysis, we define the 24 developed countries as the target site and the remaining 138 developing countries as the source site.

We evaluate the performance of WaTL against several key baselines on this dataset. These include models trained only on the target data (\texttt{Only Target}), only on the source data (\texttt{Only Source}), and on a naive pooling of both (\texttt{Target + Source}). Furthermore, for these baselines, we compare the performance of our underlying \f regression framework against Wasserstein Regression \citep{chen:23:1}, another state-of-the-art method for distributional data.

\begin{table}[htbp]
\centering
\caption{Performance and Training Time with Varying Target Sample Sizes.}
\label{tab:sample_size_performance}
\begin{tabular}{ccc}
\toprule
\textbf{Number of Target Samples} & \textbf{RMSPR} & \textbf{Training Time (ms)} \\
\midrule
14 & 0.028 & 0.598 \\
19 & 0.025 & 0.597 \\
24 & 0.022 & 0.694 \\
\bottomrule
\end{tabular}
\end{table}

The comprehensive results are presented in Figure~\ref{HumanMortality} (b). The proposed WaTL method achieves the lowest Root Mean Squared Prediction Risk (RMSPR) of 0.022, demonstrating a marked improvement over all alternatives. Notably, it significantly outperforms models trained solely on the 24 target samples (RMSPR of 0.027 for \f Regression and 0.025 for Wasserstein Regression) and the naive data pooling approach (RMSPR of 0.033). This highlights that simply combining datasets is insufficient to overcome the domain shift between developing and developed countries, validating the necessity of our bias-correction mechanism.

To further assess WaTL's robustness and practicality, especially in common scenarios with limited target data, we analyze its performance with a varying number of target samples. As shown in Table~\ref{tab:sample_size_performance}, the model's predictive accuracy consistently improves as the target sample size increases from 14 to 24. It is also worth noting that the method maintains exceptional computational efficiency, with training times remaining under one millisecond. These findings confirm that WaTL is not only highly accurate but also a robust and efficient solution for real-world demographic studies where target data can be scarce.

\begin{figure}[htbp]
    \centering
    \begin{subfigure}{0.48\textwidth}
        \centering
        \includegraphics[width=\linewidth]{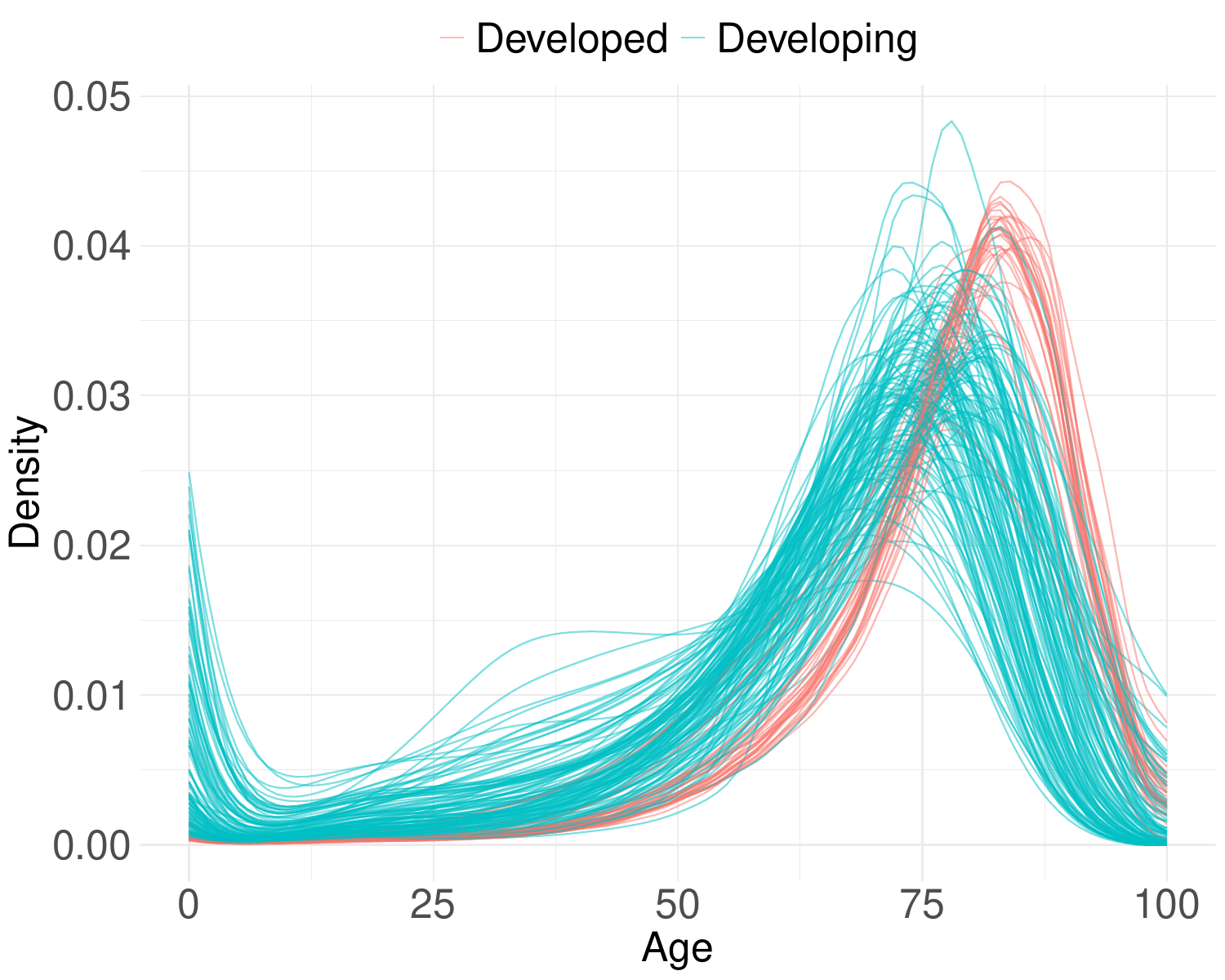}
        \caption{}
    \end{subfigure}
    \hfill
    \begin{subfigure}{0.48\textwidth}
        \centering
        \includegraphics[width=\linewidth]{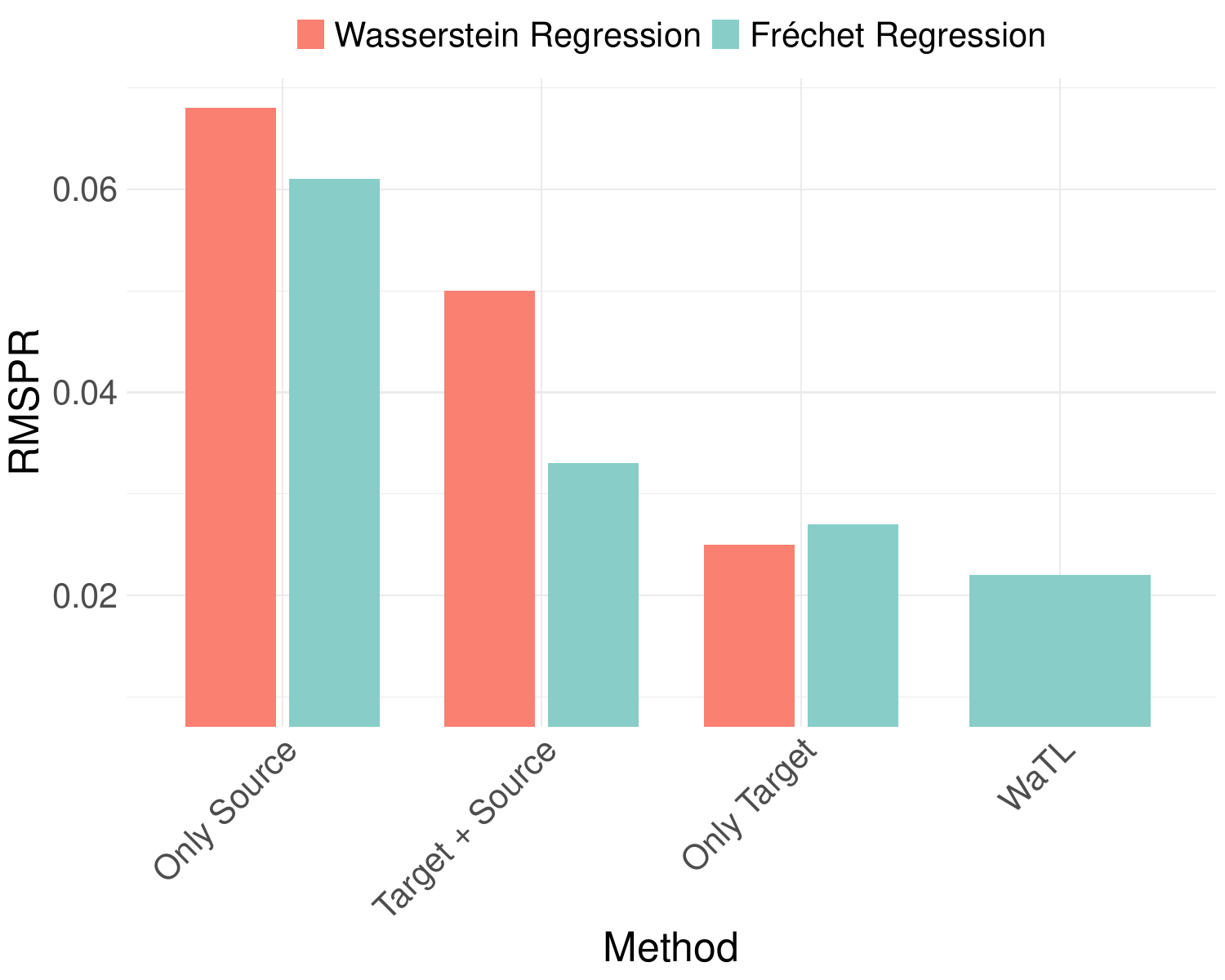}
        \caption{}
    \end{subfigure}
    \vskip 0.05in
    \caption{(a) Age-at-death densities of developed and developing countries; (b) Root mean squared prediction risk (RMSPR) of WaTL and Only Target methods for human mortality data.}
    \label{HumanMortality}
    \vskip -0.1in
\end{figure}

\section{Additional Notations}
For any $g_1, g_2 \in L^2(0,1)$, the $L^2$ inner product is defined as  
\[\langle g_1,g_2\rangle_2= \int_0^1 g_1(z)g_2(z)dz.\]

The total variation of a function $g$ on the interval $(a,b)$ is  
\[V_a^b(g) = \underset{n\rightarrow +\infty}{\lim}\sup_{a<x_1<\ldots<x_n<b}\sum_{i=1}^{n} |g(x_i)-g(x_{i-1})|.\]  

The space of bounded variation functions is given by  
\[BV((0,1), H) = \{ g \colon (0,1) \rightarrow (-H,H)|V_a^b(g) \leq H \},\]
which is equipped with the $L^2$ metric.

For a matrix $\Sigma$, let $\|\Sigma\|$ and $\|\Sigma\|_{\rm F}$ denote its operator norm and Frobenius norm, respectively, and let $\gamma_i(\Sigma)$ be its $i$th smallest singular value. 

We denote the characteristic function of a set $\mathcal{A}$ by $1_{\mathcal{A}}$ and use $|\mathcal{A}|$ to represent its cardinality. In a metric space $(\Omega, d)$, the covering number $\mathcal{N}(K, d, \epsilon)$ represents the smallest number of closed balls of radius $\epsilon$ required to cover a subset $K \subset \Omega$. The notation $\bigsqcup$ is used to denote the disjoint union of sets. 

For notational simplicity, we will omit $(x)$ from $f(x)$ when the meaning is clear from the context.

\section{Proof}

\begin{lemma}\label{lem:0}
Consider $U:\mathcal{W}\times (0,1)\mapsto \mathbb{R}$, where $U(\nu,x):=F_\nu^{-1}(x)$ and $\mathcal{W}\times (0,1)$ is endowed with the product $\sigma-$algebra generated by the Borel algebra on $\mathcal{W}$ and the Borel algebra on $(0,1)$. The first Borel algebra $\mathcal{B}(\mathcal{W})$ is generated by open balls in $(\mathcal{W},d_{\mathcal{W}})$ and the second $\mathcal{B}((0,1))$ is generated by Euclidean open balls. Then $U$ is measurable.
\end{lemma}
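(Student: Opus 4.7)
The plan is to reduce joint measurability of $U$ to the measurability of its sub-level sets $\{(\nu,x)\in\mathcal{W}\times(0,1):U(\nu,x)\leq s\}$ for $s\in\mathbb{R}$, and to rewrite each such set using the standard Galois identity between a CDF and its left-continuous inverse, namely $F_\nu^{-1}(x)\leq s\iff F_\nu(s)\geq x$. Once this identity is in place, the sub-level set becomes $\{(\nu,x):F_\nu(s)-x\geq 0\}$, and measurability reduces to showing that the single-variable map $\nu\mapsto F_\nu(s)$ is Borel measurable from $(\mathcal{W},\mathcal{B}(\mathcal{W}))$ to $\mathbb{R}$, since $x\mapsto x$ is trivially measurable and the difference of measurable functions on a product space (each obtained by composing with a projection) is measurable.

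The key step will be establishing that $\nu\mapsto F_\nu(s)=\nu((-\infty,s])$ is Borel measurable on the Wasserstein space. For each $\epsilon>0$ I would construct a bounded continuous cutoff $\phi_\epsilon:\mathbb{R}\to[0,1]$ equal to $1$ on $(-\infty,s]$, vanishing on $[s+\epsilon,\infty)$, and linearly interpolating in between, so that $\phi_\epsilon\downarrow \mathbf{1}_{(-\infty,s]}$ pointwise as $\epsilon\downarrow 0$. Because Wasserstein convergence implies weak convergence, the functional $\nu\mapsto\int\phi_\epsilon\,d\nu$ is continuous on $(\mathcal{W},d_{\mathcal{W}})$ and hence Borel measurable. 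By monotone convergence applied to each $\nu$, $F_\nu(s)=\lim_{\epsilon\downarrow 0}\int\phi_\epsilon\,d\nu$, so $\nu\mapsto F_\nu(s)$ is a pointwise limit of Borel-measurable functions and therefore Borel measurable.

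Putting the pieces together, the map $(\nu,x)\mapsto F_\nu(s)-x$ is measurable with respect to $\mathcal{B}(\mathcal{W})\otimes\mathcal{B}((0,1))$, so $\{U\leq s\}=\{(\nu,x):F_\nu(s)\geq x\}$ lies in the product $\sigma$-algebra for every $s\in\mathbb{R}$. Since these sub-level sets generate $\mathcal{B}(\mathbb{R})$, this proves that $U$ is jointly measurable.

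The main obstacle I anticipate is the measurability of $\nu\mapsto F_\nu(s)$: naive pointwise evaluation of a CDF is not weakly continuous (continuity only holds at continuity points of the limit CDF), so one cannot simply invoke continuity. The bounded-continuous approximation $\phi_\epsilon$ circumvents this by expressing $F_\nu(s)$ as a monotone limit of continuous functionals, which preserves measurability under the limit even though the evaluation itself fails to be continuous. The rest of the argument is a standard manipulation of sub-level sets on a product space.
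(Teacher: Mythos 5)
Your proof is correct, but it takes a genuinely different route from the paper's. You pass to the CDF side via the Galois identity $F_\nu^{-1}(x)\le s\iff F_\nu(s)\ge x$ (which uses right-continuity of $F_\nu$), reduce joint measurability to Borel measurability of the single-variable map $\nu\mapsto F_\nu(s)$, and obtain the latter as a monotone pointwise limit (along $\epsilon=1/n$) of the functionals $\nu\mapsto\int\phi_\epsilon\,d\nu$, which are continuous on $(\mathcal{W},d_{\mathcal{W}})$ because Wasserstein convergence implies weak convergence. The paper instead works directly with the quantile function: it uses left-continuity and monotonicity of $u\mapsto F_\nu^{-1}(u)$ to write the super-level set $\{F_\nu^{-1}(x)>0\}$ as a countable union over rational levels $q$, and then argues that each slice $\{\nu:F_\nu^{-1}(q)>0\}$ is Borel in $\mathcal{W}$ via separability and a $\limsup$-of-balls construction over a countable dense subset. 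Your approach buys a cleaner and more self-contained argument: the paper's identification of $\{\nu:F_\nu^{-1}(q)>0\}$ with a $\limsup$ of balls implicitly relies on a semicontinuity property of $\nu\mapsto F_\nu^{-1}(q)$ that is not spelled out, whereas your continuous-cutoff approximation sidesteps pointwise evaluation of CDFs (which is indeed not weakly continuous) in a fully rigorous way. The paper's approach, in turn, leans on the intrinsic structure of quantile functions and the separability of $\mathcal{W}$, which is closer in spirit to the section-wise measurability arguments used elsewhere in its proofs.
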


Lemma~\ref{lem:0} enables a simplified expression for the conditional \f mean of $\nu$ given $X = x$ by applying Fubini's theorem. Specifically, we have  
\begin{align*}
m(x) &= \argmin_{\mu \in \mathcal{W}} E\{ d_{\mathcal{W}}^2(\nu, \mu)|X = x \} \\  
&= \argmin_{\mu \in \mathcal{W}} E\{ d_{L^2}^2(F_{\nu}^{-1}, F_{\mu}^{-1})|X = x \} \\  
&= \argmin_{\mu \in \mathcal{W}} d_{L^2}^2(E\{F_{\nu}^{-1}|X = x\}, F_{\mu}^{-1}),
\end{align*}  
where the last equality follows from Fubini's theorem, since $E\{F_{\nu}^{-1}|X = x\}$ is measurable.
A similar argument applies to $m_G(x)$ and $\hat{m}_G(x)$.

\subsection{Proof of Lemma \ref{lem:0}}
\begin{proof}
Without loss of generality, It suffices to prove $\{(\nu,x)|F^{-1}_\nu(x)>0\}$ is measurable. Note that any quantile function  is left continuous, hence $\{(\nu,x)|F^{-1}_\nu(x)>0\}=\cup_{q\in \mathbb{Q}\cap (0,1)} \{(\nu,q)|F^{-1}_\nu(q)>0\}$.

Then without loss of generality, we suffice to prove $\{(\nu,0.5)|F^{-1}_\nu(0.5)>0)\}$ is measurable. Since $\mathcal{W}$ is separable, we can select a dense countable subset $K$. We then define $A=\{\nu\in K|F^{-1}_\nu(0.5)>0\}$. Assuming $A=\{\nu_i,i\in \mathbb{N}\}$, we have $\{\nu|F^{-1}_\nu(0.5)>0\}=\{\nu|\varliminf_{i\rightarrow \infty} d_{\mathcal{W}}(\nu_i,\nu)=0\}=\lim_{n \rightarrow \infty} \varlimsup_{i \rightarrow \infty}B_{\frac{1}{n}}(\nu_i)$ where $B_\epsilon(\nu)$ is the ball centered at $\nu$ with radius $\epsilon$. Here we remark that both left continuous and monotone increasing are used
in the first equation. Hence $U$ is measurable.
\end{proof}
\subsection{Proof of Lemma \ref{lem:1}}
 \begin{proof}
First, it is easy to check that $f^{(k)}\in BV((0,1),H_0)$ for some  $H_0>0$ since $f^{(k)}=E\{s_{G}^{(k)}(x)1_{\{s_{G}^{(k)}(x)>0\}}F_{\nu^{(k)}}^{-1}\}-E\{|s_{G}^{(k)}(x)|1_{\{s_{G}^{(k)}(x)<0\}}F_{\nu^{(k)}}^{-1}\}$ and $E|s_{G}^{(k)}(x)|<\infty$. By taking $H_0$ large enough,  We also claim that $P(\widehat{f}^{(k)}\in BV((0,1),H_0))\rightarrow 1$ since $\widehat{f}^{(k)}$ also has a similar decomposition and \[
      \begin{split}
          \frac{1}{n_k}\sum_{i=1}^{n_k} |s_{iG}^{(k)}(x)|&\leq 
            \frac{1}{n_k}\sum_{i=1}^{n_k} (|s_{iG}^{(k)}(x)-s_i^{(k)}(x)|+|s_i^{(k)}(x)|)\\&\leq o_p(1)+H_1
      \end{split}
      \]
      for some $H_1>0$ with high probability, where $s_i^{(k)}(x)=1+(X_i^{(k)}-\theta_k)\Sigma_k^{-1}(x-\theta_k)$. The last inequality holds because of the result given by the proof of Theorem 2 in \cite{mull:19:6} that $\frac{1}{n_k}\sum_{i=1}^{n_k} |s_{iG}^{(k)}(x)-s_i^{(k)}(x)|=o_p(1)$ and the assumption that $X^{(k)}$ is subguassian.
      
To use Theorem 2 in \cite{mull:19:6}, taking $\Omega$ to be $BV((0,1),H_0)$, $M(g,x)$ to be $M^{(k)}(g,x)=E\{s_G^{(k)}(x)\|F_{\nu^{(k)}}^{-1}-g\|^2_2\}$ for any $g\in BV((0,1),H_0)$,  it suffices to check that the other two assumptions of the theorem:\begin{equation*}
    \int_0^1\sqrt{1+\log \mathcal{N}(B_\delta(f^{(k)})\cap BV((0,1),H_0),d_{L^2},\delta\epsilon)}d\epsilon=O(1)
\end{equation*}
as $\delta\rightarrow 0$ and \begin{equation*}
    M^{(k)}(g,x)-M^{(k)}(f^{(k)},x)\geq \|f^{(k)}-g\|_2^2.
\end{equation*}
Example 19.11 in \cite{van:00} gives a bound of the covering number of the space $(BV((0,1),H_0),d_{L^2})$,
\begin{equation*}
    \mathcal{N}(BV((0,1),H_0),d_{L^2},\epsilon)\leq e^{\frac{K}{\epsilon}}
\end{equation*} for some $K>0$ where $K$ is independent of $\epsilon$.
     
     The rest is similar to the proof of Proposition 1 in \cite{mull:19:6}.
     For $g\in BV((0,1),H_0)$, $B_{\gamma}(g)$ denotes the $L^2$ ball of radius $\gamma$ centered at $g$. Let $\mathcal{C}_\epsilon(f^{(k)}):=\{g_u:u\in U\}$ such that $|U|=|\mathcal{N}(BV((0,1),H_0)\cap B_1(f),d_{L^2},\epsilon)|\leq e^{K\epsilon^{-1}}$ and the balls $B_\epsilon(g_u)$ covers $B_1(f)\cap BV((0,1),H_0)$. For $\delta>0$, we define $\Tilde{g}_u=f^{(k)}+\delta(g_u-f^{(k)})$. Then the balls $B_{\delta\epsilon}(\Tilde{g}_u)$ covers $B_\delta(f)\cap BV((0,1),H_0)$. Hence \begin{equation*}
        \int_0^1\sqrt{1+\log \mathcal{N}(B_\delta(m_G(x))\cap BV((0,1),H_0),d_{L^2},\delta\epsilon)}d\epsilon\leq \int_0^1 \sqrt{1+K\epsilon^{-1}}d\epsilon \leq 1+2\sqrt{K}<\infty.
     \end{equation*}
     For the last assumption, just note that \begin{equation*}
         M^{(k)}(g,x)-M^{(k)}(f^{(k)},x)=\|f^{(k)}-g\|_2^2. 
     \end{equation*}
     Then according to Theorem 2 in \cite{mull:19:6}, $\|\widehat{f}^{(k)}-f^{(k)}\|_2=O_p(n_k^{-1/2})$.
     \end{proof}

\subsection{Proof of Theorem \ref{thm:1}}
To prove Theorem~\ref{thm:1}, we first establish a lemma that quantifies the bias introduced in Step 1 of Algorithm~\ref{alg:1}. This lemma is formulated for a general metric space, making it broadly applicable and potentially useful for extending transfer learning algorithms to other metric spaces.

Here are some notations, most of which are similar to our original setting. The only difference is that we use $Y$ instead of $\nu$ to represent a random object in a general metric space $(\Omega,d)$. We define $n_*=\min_{1\leq k \leq K}n_k$, 
$n_{\mathcal{A}}=\sum_{i=1}^Kn_i$, $\alpha_k=n_k/(n_0+n_{\mathcal{A}})$, 
\[m_1(x)=\argmin_{\omega\in \Omega}M^1(\omega,x),\quad M^1(\omega,x)=\sum_{k=0}^{K}\alpha_k E\{s_G^{(k)}(x)d^2(\omega, Y_i^{(k)})\},\]
and
\[ \widehat{m}_1(x)=\argmin_{\omega \in \Omega}M_n^1(\omega,x),\quad M_n^1(\omega,x)=\frac{1}{n_0+n_{\mathcal{A}}}\sum_{k=0}^K\sum_{i=1}^{n_k}s_{iG}^{(k)}(x)d^2(\omega,Y^{(k)}_i).\]

We impose the following mild condition on the metric space $(\Omega, d)$, which is widely used in the literature on non-Euclidean data analysis \citep{mull:19:6}. This condition holds for various metric spaces commonly encountered in real-world applications, including the Wasserstein space, the space of networks, and the space of symmetric positive-definite matrices, among others.

\begin{condition}\label{cond:B.1}

\begin{itemize}
    \item[(i)] $m_1(x)$ uniquely exists and $\widehat{m}_1(x)$ uniquely exists almost surely. Additionally for any $\gamma>0$, $\inf_{d(m_1(x),\omega)>\gamma}M^1(\omega,x)>M^1(m_1(x),x)$.
    \item[(ii)] Let $B_\delta(m_1(x))\subset \Omega$ be the ball of radius $\delta$ centered at $m_1(x)$. Then
    \begin{equation*}
        J(\delta):=\int_0^1\sqrt{1+\log \mathcal{N}(B_\delta(m_1(x)),d,\delta\epsilon)}d\epsilon=O(1)
    \end{equation*}
    as $\delta\rightarrow0$.
    \item[(iii)] There exist $\eta>0$, $C>0$ and $\beta>1$, possibly depending on $x$, such that whenever $d(m_1(x),\omega)<\eta$, we have $M^1(\omega,x)-M^1_n(m_1(x),x)\geq Cd^\beta(\omega,m_1(x))$.
    \end{itemize}
\end{condition}

\begin{lemma}\label{lem:2}
Under Conditions \ref{con:1}, \ref{con:2} and \ref{cond:B.1}, $d^2(\widehat{m}_1(x),m_1(x))=O_p\Big(\big(\frac{\sum_{k=0}^K\sqrt{n_k}}{n_0+n_{\mathcal{A}}}+(n_0+n_{\mathcal{A}})^{-1/2}\big)^\frac{1}{\beta-1}\Big).$
\end{lemma}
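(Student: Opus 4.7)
My plan is to treat Lemma 2 as a standard M-estimation problem in a general metric space and invoke a Van der Vaart--Wellner-style peeling argument (Theorem 3.2.5 of \citet{well:96}) along the lines of the proof of Theorem 2 in \citet{mull:19:6}, with the heterogeneous multi-source structure entering only through the modulus-of-continuity bound on the centered empirical process $G_n(\omega) := (M_n^1 - M^1)(\omega) - (M_n^1 - M^1)(m_1(x))$. Curvature is already supplied by Condition~\ref{cond:B.1}(iii), and consistency of $\widehat{m}_1(x)$ (needed to localize to a small neighborhood of $m_1(x)$) follows from Condition~\ref{cond:B.1}(i) together with the uniform convergence $\sup_\omega |M_n^1(\omega, x) - M^1(\omega, x)| = o_p(1)$ implied by Conditions~\ref{con:1}, \ref{con:2}, and \ref{cond:B.1}(ii).

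The first step is to split
\[M_n^1(\omega, x) - M^1(\omega, x) = T_1(\omega) + T_2(\omega),\]
where $T_1$ isolates the plug-in error from replacing $(\Sigma_k, \theta_k)$ by $(\widehat{\Sigma}_k, \overline{X}_k)$ in the weight $s_{iG}^{(k)}$, and $T_2$ is the mean-zero empirical process that uses the exact weights $s_i^{(k)}(x) := 1 + (X_i^{(k)} - \theta_k)^\T \Sigma_k^{-1}(x - \theta_k)$ in place of $s_{iG}^{(k)}$. For $T_1$, Condition~\ref{con:1} and sub-Gaussian matrix concentration give $\|\widehat{\Sigma}_k - \Sigma_k\| = O_p(n_k^{-1/2})$ and $\|\overline{X}_k - \theta_k\| = O_p(n_k^{-1/2})$; combining the averaged bound $n_k^{-1}\sum_i |s_{iG}^{(k)} - s_i^{(k)}| = O_p(n_k^{-1/2})$ already used in the proof of Lemma~\ref{lem:1} with the Lipschitz inequality $|d^2(\omega_1, y) - d^2(\omega_2, y)| \lesssim d(\omega_1, \omega_2)$ (valid because $Y^{(k)}$ is bounded per Condition~\ref{con:1}) yields
\[\sup_{d(\omega, m_1(x)) \leq \delta} |T_1(\omega) - T_1(m_1(x))| = O_p\Bigl(\delta \cdot \tfrac{\sum_{k=0}^K \sqrt{n_k}}{n_0+n_{\mathcal{A}}}\Bigr).\]
For $T_2$, the variance of any single centered summand at a fixed $\omega$ is controlled by $\sum_k \alpha_k^2/n_k = (n_0+n_{\mathcal{A}})^{-1}$, and Condition~\ref{cond:B.1}(ii) delivers the entropy integral $J(\delta) = O(1)$, so a standard chaining inequality applied to the Lipschitz function class $\{(y, X) \mapsto s_i^{(k)}(x)[d^2(\omega, y) - d^2(m_1(x), y)] : d(\omega, m_1(x)) \leq \delta\}$ gives
\[E\sup_{d(\omega, m_1(x)) \leq \delta} |T_2(\omega) - T_2(m_1(x))| \lesssim \delta \cdot (n_0+n_{\mathcal{A}})^{-1/2}.\]

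Adding these bounds yields the modulus $\phi_n(\delta) \lesssim \delta \cdot r_n$ with $r_n := \sum_{k=0}^K \sqrt{n_k}/(n_0+n_{\mathcal{A}}) + (n_0+n_{\mathcal{A}})^{-1/2}$. Feeding this into the peeling argument with the curvature bound of Condition~\ref{cond:B.1}(iii) produces, with high probability,
\[C\, d^\beta(\widehat{m}_1, m_1) \leq M^1(\widehat{m}_1) - M^1(m_1) \lesssim \phi_n(d(\widehat{m}_1, m_1)) \lesssim d(\widehat{m}_1, m_1) \cdot r_n,\]
so $d(\widehat{m}_1, m_1) = O_p(r_n^{1/(\beta-1)})$ and hence $d^2(\widehat{m}_1, m_1) = O_p(r_n^{2/(\beta-1)})$; since $r_n = o_p(1)$ and $\beta > 1$ give $r_n^{2/(\beta-1)} \leq r_n^{1/(\beta-1)}$ for $n$ large enough, this implies the stated rate. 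The main technical obstacle is the chaining bound for $T_2$, because the multipliers $s_i^{(k)}(x)$ are unbounded sub-Gaussian; a clean treatment requires either truncating them at a $\sqrt{\log n_k}$ level (with a separate tail bound for the negligible complement) or invoking a Bernstein-type maximal inequality uniformly over the source index $k$. The $T_1$ bound is comparatively routine once the averaged concentration of $|s_{iG}^{(k)} - s_i^{(k)}|$ from Lemma~\ref{lem:1} is invoked, and the final M-estimation step is a standard consequence of Conditions~\ref{cond:B.1}(ii) and \ref{cond:B.1}(iii).
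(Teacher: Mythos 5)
Your proposal is correct and follows essentially the same route as the paper's proof: the same decomposition of $M_n^1-M^1$ into a weight--estimation (plug-in) term and an exact-weight centered empirical process, the same entropy/envelope maximal inequality with the curvature bound of Condition~\ref{cond:B.1}(iii), and the same peeling step yielding $d(\widehat{m}_1(x),m_1(x))=O_p(r_n^{1/(\beta-1)})$. The only difference is bookkeeping: you assign the rate $\delta\sum_{k}\sqrt{n_k}/(n_0+n_{\mathcal{A}})$ to the plug-in term and $\delta(n_0+n_{\mathcal{A}})^{-1/2}$ to the empirical process, whereas the paper obtains the reverse (exploiting cross-source Hoeffding cancellation for the $W_0^{(k)},W_1^{(k)}$ terms and using a per-source maximal inequality for the empirical process); since both assignments sum to the same modulus $\delta r_n$, the conclusion is unaffected.
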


\subsubsection{Proof of Lemma \ref{lem:2}}
 \begin{proof}
  
   Denote $V_{n}(\omega,x)=M_{n}^1(\omega,x)-M^1(\omega,x)$, and $D_i^{(k)}(\omega,x)= d^2(Y_i^{(k)},\omega)-d^2(Y_i^{(k)},m_1(x))$ and recall $s_i^{(k)}(x)=1+(X_i^{(k)}-\theta_k)\Sigma^{-1}_k(x-\theta_k)$. Then
  \begin{align}
  \label{eq:1}
    |V_{n}(\omega,x)-V_{n}(m_1(x),x)|&\leq \sum_{k=0}^K \alpha_{k}\{|\frac{1}{n_{k}}\sum_{i=1}^{n_k} (s_{iG}^{(k)}(x)-s_i^{(k)}(x))D_{i}^{(k)}(\omega)|\nonumber \\&+|\frac{1}{n_{k}}\sum_{i=1}^{n_k} s_i^{(k)} D_{i}^{(k)}(\omega)-E\{s_i^{(k)}D_{i}^{(k)}(\omega)|\}.
  \end{align}
  For any $\delta>0$,
  \[\underset{d(\omega,m_1(x))<\delta}{\sup}\left|\frac{1}{n_{k}}\sum_{i=1}^{n_k} (s_{iG}^{(k)}(x)-s^{(k)}_i(x))D_{i}^{(k)}(\omega)\right|\leq \frac{2\mathrm{diam}(\Omega)\delta}{n_{k}}\sum_{i=1}^{n_k} \left|W_{0}^{(k)}(x)+W_{1}^{(k)}(x)^\T X_{i}^{(k)}\right|,\]
where $W_0^{(k)}(x):=(\overline{X}^{(k)})^\T{\Sigma_k}^{-1}(x-\overline{X}^{(k)})-\theta_k^\T\Sigma^{-1}_k(x-\theta_k)$ and $W_1^{(k)}(x):=\Sigma^{-1}_k(x-\theta_k)-\widehat{\Sigma}_k(x-\overline{X}^{(k)})$. Then 
\begin{equation}\label{eq:diff}
    s_{iG}^{(k)}(x)-s_i^{(k)}(x)=W_0^{(k)}(x)+(W_1^{(k)}(x))^\T X_i^{(k)}. 
    \end{equation}
    We neglect the superscript for the sake of notation simplicity.

Denote $B_{1,M}:=\{\|\frac{1}{n} \sum_{i=1}^{n}|X_i| \|\leq M\}$ where $|X_i|$ represents the element-wise absolute value of $X_i$. Let $B_{2,M}=\{\|\widehat{\Sigma}^{-1}\|\leq M\} $, $B_M=B_{1,M}\cap B_{2,M}$ and $\widehat{\Sigma}'=\frac{1}{n}\sum (X_i-\theta)(X_i-\theta)^\T$.
In $B_{1,M}$, 
\begin{align*}
       \|\widehat{\Sigma}-\widehat{\Sigma}'\|&\leq \|\widehat{\Sigma}-\widehat{\Sigma}'\|_F\\&=\left\|\frac{1}{n}\sum_{i=1}^n [(X_i-\overline{X})(X_i-\overline{X_i})^\T-(X_i-\theta)(X_i-\theta)^\T]\right\|_F\\&\leq [2p^2\|\overline{X}^2-\theta^2\|_2^2+M^2\|\overline{X}-\theta\|_2^2]^{1/2}\\
       &\leq 2p(M+|\theta|)\|\overline{X}-\theta\|_2.
\end{align*}
Using Theorem 6.5 in \cite{wain:19} leads to
\begin{equation*}
    P(\|\widehat{\Sigma}-\Sigma\|>\epsilon)\leq 2e^{-C_1\epsilon^2n}+C_2e^{-C_3(\frac{1}{2}\epsilon-\sqrt{\frac{C_4}{n}})^2n},
\end{equation*}
for any $\epsilon\in (2\sqrt{\frac{C_4}{n}},1)$.
Note that 
\begin{align*}
        \|\widehat{\Sigma}^{-1}\|&\leq\|\Sigma^{-1}\|+\|\widehat{\Sigma}^{-1}-\Sigma^{-1}\|\\
        &\leq \|\Sigma^{-1}\|+\frac{|\gamma_1(\Sigma)-\gamma_1(\widehat{\Sigma})|}{\gamma_1(\Sigma)\gamma_1(\widehat{\Sigma})}\\
        &\leq \|\Sigma^{-1}\|+\frac{\|\Sigma-\widehat{\Sigma}\|}{\gamma_1(\Sigma)(\gamma_1(\Sigma)-\|\Sigma-\widehat{\Sigma}\|)},
\end{align*}
where the third inequality is from the Weyl's inequality. Then there exists $M>0$ such that
\begin{equation*}
P(\|\widehat{\Sigma}^{-1}\|\geq M)\leq 2e^{-C_1\frac{4}{R_3^2}n}+C_2e^{(-C_3(\frac{2}{R_3}-\sqrt{\frac{C_4}{n}})n)}.
\end{equation*}
Consequently,
\begin{equation*}
P(B_M^c)\leq 3e^{-\frac{4C_1}{R_3^2}n}+C_2e^{(-C_3(\frac{2}{R_3}-\sqrt{\frac{C_4}{n}})n)}.
    \end{equation*}
    
In $B_M$,
\begin{align*}
      W_0(x)&=|(\overline{X}-\theta)\Sigma^{-1}x-\theta\Sigma^{-1}(\theta-\overline{X})-\overline{X}\Sigma^{-1}(\theta-\overline{X}))
      |\\ &\leq |(\overline{X}-\theta)\Sigma^{-1}(|x|+|\theta|+M)|.
\end{align*}
Hence in $B'_M:=\cap_{k=0}^K B_M^{(k)}$, we have
\begin{align}\label{eq:eq3}
    P\left(\sum_{k=0}^K \alpha_k|W_0(x)|>t\right)&\leq 2e^{-\frac{C_5t^2}{\sum_{k=0}^K \alpha_k^2\|W_0(x)\|^2_{\Psi_2}}}\nonumber\\&\leq 2e^{-\frac{C_6t^2}{\sum_{k=0}^K\frac{n_k}{(n_0+n_{\mathcal{A}})^2}}}\leq 2e^{-C_6t^2(n_0+n_{\mathcal{A}})},
\end{align}
for some constants $C_5,C_6>0$.

In the first inequality, we use general Hoeffding's inequality (Theorem 2.6.2 in \cite{vers:18}) and in the second inequality, we use Proposition 2.6.1 in \cite{vers:18}.

In addition,
\begin{align*}
        \frac{1}{n}\sum_{i=1}^n |(W_1(x))^TX_i|&\leq M[\|\Sigma^{-1}-\widehat{\Sigma}^{-1}\|\|x\|-\Sigma^{-1}(\theta-\overline{X})+(\Sigma^{-1}-\widehat{\Sigma}^{-1})\overline{X}]\\
        &\leq M(\|x\|+M)(\|\Sigma^{-1}-\widehat{\Sigma}^{-1}\|)+MR_3|\theta-\overline{X}|\\
        &\leq M^2(\|x\|+M)R_3\|\Sigma-\widehat{\Sigma}\|+MR_3|\theta-\overline{X}|.
\end{align*} 
 Hence in $B'_M$
\begin{equation}\label{eq:sum2}
    P\left(\sum_{k=0}^K \alpha_k \frac{1}{n_k}\sum_{i=1}^{n_k}|(X_i^{(k)})^TW_1^{(k)}(x)|>t\right)\leq 2e^{-\frac{C_{8}t^2}{n_0+n_{\mathcal{A}}}}+P\left(\sum_{k=0}^K \alpha_k C_{9}\|\Sigma_k-\widehat{\Sigma}_k\|>t\right)
\end{equation}
for some constant $C_{8},C_9>0$.
Using Markov inequality, the second term is bounded by
\begin{equation*}
    e^{-\lambda t}\Pi_{k=0}^KEe^{\lambda \alpha_kC_{9}\|\Sigma_k-\widehat{\Sigma}_k\| }\leq e^{-\lambda t}e^{4pK+\sum_{k=0}^KC_{10}\frac{\alpha_k^2\lambda^2}{n_k}},
\end{equation*}
which is from Theorem 6.5 in \cite{wain:19} for any $\lambda$ satisfying $\lambda\leq C_{11}n_*$ for some constants $C_{10}, C_{11}>0$.
We can choose $\lambda:=\frac{t(n_0+n_{\mathcal{A}})}{2C_{10}}$, which does satisfy the theorem's assumption if $t=O\left((n_0+n_{\mathcal{A}})^{-1/2}\right)$. Here we utilize Condition \ref{con:2}.
Then \begin{equation}\label{eq:sum3}
    P\left(\sum_{k=0}^K \alpha_k C_{9}\|\Sigma_k-\widehat{\Sigma}_k\|>t\right)\leq e^{-\frac{t^2(n_0+n_{\mathcal{A}})}{4C_{12}}+4dK}.
\end{equation}
Note that $K=o(n_0+n_\mathcal{A})$,
hence \begin{equation*}
    \underset{d(\omega,m_1(x))<\delta}{\sup} 
 \sum_{k=0}^K\alpha_{k}\left[\left|\frac{1}{n_{k}}\sum (s_{iG}^{(k)}(x)-s_i^{(k)}(x))D_{i}^{k}(\omega)\right|\right]=O_p\left(\delta(n_0+n_{\mathcal{A}})^{-1/2}\right),
 \end{equation*}
 since $P(B_M)\rightarrow 1$ due to Condition \ref{con:2}.

To bound the second term of \eqref{eq:1}, following the proof of Theorem 2 in \cite{mull:19:6} for each $k$, we define the functions $g_\omega^{(k)}:\mathbb{R}^p\times \Omega\mapsto\mathbb{R}$ as 
 \begin{equation*}
     g^{(k)}_\omega(z,y)=[1+(z-\theta_k)^\T\Sigma^{-1}_k(x-\theta_k)]d^2(y,\omega)
 \end{equation*} and the function class \begin{equation*}
     \mathcal{M}_\delta^{(k)}:=\{g_\omega^{(k)}-g_{m_1(x)}^{(k)}:d(\omega,m_1(x))<\delta\}.
 \end{equation*}
We have
 \begin{equation*}
    E\left\{\underset{d(\omega,m_1(x))<\delta}{\sup}\left|\frac{1}{n_{k}}\sum_{i=1}^{n_k} s_{iG}^{(k)}(x)D_{(i)}^{k}(g)-E\{s_i^{(k)}D_i^{(k)}(g)\}\right|\right\}\leq J(E[(G^{(k)}_{\delta}(x))^{2}]^{\frac{1}{2}}\sqrt{n_{k}},
 \end{equation*}
where $G^{(k)}_{\delta}(z):=2\mathrm{
diam}(\Omega)\delta[1+(z-\theta_k)^\T\Sigma_k^{-1}(x-\theta_k)]$ is the envelop function.

Note that \begin{equation*}
    E(G_\delta^{(k)}(X)^2)\leq C_{14}\delta^2
    \end{equation*}
    for some constant $C_{14}>0$, which does not depend on $k$.
Hence \begin{equation} \label{eq:E}
     E \left\{\underset{d(\omega,m_1(x))<\delta}{\sup}\left|\sum_{k=0}^K\frac{1}{n_0+n_{\mathcal{A}}}\sum_{i=1}^{n_k} s_{iG}^{(k)}(x)D_{i}^{(k)}(g)-E\{s_i^{(k)}D_i^{(k)}(g)\}\right|\right\}=O\left(\delta \sum_{k=0}^K \frac{\sqrt{n_k}}{n_0+n_{\mathcal{A}}}\right).
     \end{equation}
Define \[D_R:=\left\{\underset{d(\omega,m_1(x))<\delta}{\sup} \sum_{k=0}^K \alpha_{k}\left|\frac{1}{n_{k}}\sum_{i=1}^{n_k} (s_{iG}^{(k)}(x)-s_i^{(k)}(x))D_{i}^{(k)}(\omega)\right|\leq R\delta(n_0+n_{\mathcal{A}})^{-1/2}\right\}.\] We have
 \begin{equation*}
      E\big\{1_{D_{R}}\underset{d(\omega,m_1(x))<\delta}{\sup}|V_{n}(w)-V_{n}(m_{1}(x))|\big\}\leq aR 
       \delta \Big(\sum_{k=0}^K \frac{\sqrt{n_k}}{n_0+n_{\mathcal{A}}}+(n_o+n_{\mathcal{A}})^{-1/2}\Big),
 \end{equation*}
for some constant $a>0$.

Next we show
\begin{equation*}
    |M^1(w,x)-M_{n}^1(w,x)|=o_{p}(1),
\end{equation*}
and for all $\omega\in \Omega$ and for any $\kappa>0,\varphi>0$, there exists $\delta>0$ such that
\begin{equation}\label{eq:check2}
  \limsup_n P(\underset{d(\omega_1,\omega_2)<\delta}{\sup}|M_n^1(\omega_1,x)-M_n^1(\omega_2,x)|>\kappa) \leq \varphi.
\end{equation}
To prove the first assertion, 
we denote \[\Tilde{M}^1_n(\omega,x)=\frac{1}{n_0+n_{\mathcal{A}}}\sum_{k=0}^K \sum_{i=1}^{n_k} s_i^{(k)}(x)d^2(\omega,Y^{(k)}_i).\]
Then $E\Tilde{M}^1_n(\omega,x)=M^1_n(\omega,x)$
and $\Var(\Tilde{M}^1_n(\omega,x))\leq 2\sum_{k=0}^K C'\frac{n_k}{(n_0+n_{\mathcal{A}})^2}$ for some constant $C'>0$.
Hence $\Tilde{M}_n^1(\omega,x)-M_n^1(\omega,x)=o_p(1)$.

Besides, 
\begin{align*}
    M^1_n(\omega)-\Tilde{M}^1_n(\omega)&=\sum_{k=0}^K\alpha_k\frac{W^{(k)}_{0}}{n_k}\sum_{i=1}^{n_k} d^{2}(Y_i^{(k)},\omega)+\sum_{k=0}^K\alpha_k\frac{(W^{(k)}_{1})^\T}{n_k}\sum_{i=1}^{n_k} X_i^{(k)}d^{2}(Y_i^{(k)},\omega)\\&=o_p(1).
\end{align*}

The last equation is true since $\sum_{k=0}^K \alpha_k \frac{1}{n_k}\sum_{i=1}^{n_k}|(W_1(x))^\T X_i|=o_p(1)$ and $\sum_{k=0}^K \alpha_k|W_0(x)|=o_p(1)$.

To prove \eqref{eq:check2}, note that
for any $\gamma_1, \gamma_2\in \Omega$, 
\begin{align*}
|M^1_n(\gamma_1,x)-M^1_n(\gamma_1,x)| &\leq 2\mathrm{diam}(\Omega)d(\gamma_1, \gamma_2)\frac{1}{n_0+n_{\mathcal{A}}}\sum_{k=0}^K \sum_{i=1}^{n_k}|s_i^{(k)}+W_{0}^{(k)}+(W_{1}^{(k)})^\T X_i|\\&=O_p(d(\gamma_1,\gamma_2)).
\end{align*}

The last equation is true since $\sum_{k=0}^K \alpha_k \frac{1}{n_k}\sum_{i=1}^{n_k}|(W_1(x))^\T X_i|=o_p(1)$ and $\sum_{k=0}^K \alpha_k|W_0(x)|=o_p(1)$, and we can prove \begin{equation}\label{eq:sum}
    \sum_{k=0}^K\alpha_k\frac{1}{n_k}\sum_{i=1}^{n_k}|s^{(k)}_i|=1+o_p(1)
\end{equation}
in a similar way of \eqref{eq:eq3}. It follows that $d(\widehat{m}_1(x), m_1(x))=o_p(1)$.

Set $r_n=\Big(\sum_{k=0}^K \frac{\sqrt{n_k}}{n_0+n_{\mathcal{A}}}+(n_0+n_{\mathcal{A}})^{-1/2}\Big)^{\frac{\beta}{2(\beta-1)}}$ and
\begin{equation*}
     S_{j,n}(x)=\{\omega:2^{j-1}<r_nd(\omega,m_1(x))^{\frac{\beta}{2}}\leq 2^j\}.
\end{equation*}
Choose $\eta>0$ to satisfy (iii) in Condition \ref{cond:B.1} and also small enough that (ii) in Condition \ref{cond:B.1} holds for all $\delta<\eta$ and set $\Tilde{\eta}:=\eta^{\frac{\beta}{2}}$. For any integer $L_2$,
\begin{align*}
    &P\Big(r_nd^{\beta/2}(\widehat{m}_1(x),m_1(x))>2^{L_2}\Big)\leq P(D_R^c)+P\Big(2d(\widehat{m}_1(x),m_1(x))\geq \eta\Big)\\&+\sum_{\substack{j\geq L_2\\2^j\leq  r_n\Tilde{\eta}}} P\Big(\{\sup_{\omega\in S_{j,n}}|V_n(\omega)-V_n(m_1(x))|\geq C\frac{2^{2(j-1)}}{r_n^2}\}\cap D_R\Big).
    \end{align*}
The second term converges to $0$ since $d(\widehat{m}_1(x), m_1(x))=o_p(1)$. For each $j$ in the sum in the third term, we have $d(\omega,m_1(x))\leq (\frac{2^j}{r_n})^{\frac{2}{\beta}}\leq \eta$, so the sum is bounded by 
\begin{equation*}
    4aC^{-1}\sum_{\substack{j\geq L_2\\ 2^j\leq r_n \Tilde{\eta}}}\frac{2^{\frac{2j(1-\beta)}{\beta}}}{r_n^{\frac{2(1-\beta)}{\beta}}\Big(\sum_{k=0}^K \frac{\sqrt{n_k}}{n_0+n_{\mathcal{A}}}+(n_o+n_{\mathcal{A}})^{-\frac{1}{2}}\Big)}\leq 4aC^{-1}\sum_{j\geq L_2}\Big(\frac{1}{4^{\frac{\beta-1}{\beta}}}\Big)^j.
\end{equation*}
Choose $L_2$  large enough, this probability can be small enough.

Hence we have 
\[d^2(\widehat{m}_1(x),m_1(x))=O_p\left(\frac{\sum_{k=0}^K\sqrt{n_k}}{n_0+n_{\mathcal{A}}}+(n_0+n_{\mathcal{A}})^{-1/2})^\frac{1}{\beta-1}\right).\]
\end{proof}

\subsubsection{Proof of Theorem \ref{thm:1} Given Lemma \ref{lem:2}}
\begin{proof}
The proof is similar to that of Lemma \ref{lem:1}. First, it is easy to check that $f\in BV((0,1),H_2)$ for some large $H_2>0$ since \
\[f=E\left\{\sum_{k=0}^K\alpha_k s_{G}^{(k)}(x)1_{\{\sum_{k=0}^K\alpha_ks_{G}^{(k)}(x)>0\}}F_{\nu^{(0)}}^{-1}\right\}-E\left\{\left|\sum_{k=0}^K\alpha_ks_{G}^{(k)}(x)\right|1_{\{\sum_{k=0}^K\alpha_ks_{G}^{(k)}(x)<0\}}F_{\nu^{(0)}}^{-1}\right\}\]
and $E|\sum_{k=0}^K\alpha_ks_{G}^{(k)}(x)|<\infty$. Taking $H_2$ large enough, we also claim that $P(\widehat{f}\in BV((0,1),H_2))\rightarrow 1$ since $\widehat{f}_0$ also has a similar decomposition and 
\begin{align*}
    \frac{1}{n_0+n_{\mathcal{A}}}\sum_{k=0}\sum_{i=1}^{n_k} |s_{iG}^{(k)}(x)|&\leq 
    \frac{1}{n_0+n_{\mathcal{A}}}\sum_{i=1}^{n_k} (|s_{iG}^{(k)}(x)-s_i^{(k)}(x)|+|s_i^{(k)}(x)|)\\&\leq o_p(1)+H_3,
\end{align*}
for some $H_3>0$ with high probability. The last inequality follows from \eqref{eq:diff}, \eqref{eq:sum}, \eqref{eq:sum3} and \eqref{eq:sum2}. Thus, (i) in Condition~\ref{cond:B.1} holds.

(ii) in Condition~\ref{cond:B.1} follows from the same arguments used in the proof of Lemma~\ref{lem:1}. It remains to verify (iii) in Condition~\ref{cond:B.1}, which holds since
\begin{equation*}
    M^{1}(g,x)-M^{1}(f,x)=\|f-g\|_2^2.
\end{equation*}
Hence the result follows by using Lemma \ref{lem:2}.
\end{proof}

\subsection{Proof of Theorem \ref{thm:2}}
\begin{proof}
Recall that $f^{(0)}(x)=E\{s_G^{(0)}(x)F_{\nu^{(0)}}^{-1}\}$. Using the definition of $\widehat{f}_0$,
\begin{align*}
 \|\widehat{f}_0-f^{(0)}\|_2^2&=\frac{1}{n_0}\sum _{i=1}^{n_0}s_{iG}^{(0)}(x)(\|\widehat{f}_0-F^{-1}_{\nu_i^{(0)}}\|_2^2-\|f^{(0)}-F^{-1}_{\nu_i^{(0)}}\|_2^2)\\&+\frac{2}{n_0}\sum_{i=1}^{n_0} s_{iG}^{(0)}(x)\langle F^{-1}_{\nu^{(0)}_i}-f^{(0)},\widehat{f}_0-f^{(0)}\rangle_2\\& \leq-\lambda\|\widehat{f}_0-\widehat{f}\|_2+\lambda\|f^{(0)}-\widehat{f}\|_2\\&+\frac{2}{n_0}\sum_{i=1}^{n_0} s_{iG}^{(0)}(x)\langle F^{-1}_{\nu^{(0)}_i}-f^{(0)},\widehat{f}_0-f^{(0)}\rangle_2\\
 &\leq -\lambda\|\widehat{f}_0-\widehat{f}\|_2+\lambda\|f^{(0)}-\widehat{f}\|_2\\&+2\|\widehat{f}_0-f^{(0)}\|_2\|\frac{1}{n_0}\sum_{i=1}^{n_0} s_{iG}^{(0)}(x)F^{-1}_{\nu^{(0)}_i}-f^{(0)}\|_2.
\end{align*}

We define the event $E_n:=\{\|n_0^{-1}\sum_{i=0}^{n_0} s_{iG}^{(0)}(x)F^{-1}_{\nu^0_i}-f^{(0)}\|_2\leq\lambda/2\}$. According to Lemma \ref{lem:1},
we have $P(E_n)\rightarrow 1$ since $\lambda\asymp n_0^{-1/2+\epsilon}$.
 
Under $E_n$ for $n$ large enough, 
\[
\|\widehat{f}_0-f^{(0)}\|_2^2\leq 2\lambda \|f^{(0)}-\widehat{f}\|_2\leq 2\lambda (\|f^{(0)}-f\|_2+\|\widehat{f}-f\|_2)
\] holds.
Hence we have \[
  d_{\mathcal{W}}^2(\widehat{m}_G^{(0)}(x),m_G^{(0)}(x))\leq\|\widehat{f}_0-f^{(0)}\|_2^2=O_p\Big(n_0^{-1/2+\epsilon}(\|\widehat{f}-f\|_2+\psi)\Big).
\]
Then using Theorem  \ref{thm:1}, it follows that \[
    d_{\mathcal{W}}^2(\widehat{m}_G^{(0)}(x),m_G^{(0)}(x))=O_p\Big(n_0^{-1/2+\epsilon}(\psi+\frac{\sum_{k=0}^K\sqrt{n_k}}{n_0+n_{\mathcal{A}}}+(n_0+n_{\mathcal{A}})^{-1/2})\Big).
\]
\end{proof}

\subsection{Proof of Theorem \ref{thm:3}}
\begin{proof}
  We first prove $P(\widehat{\mathcal{A}}=\mathcal{A})\rightarrow 1$. Note that \begin{align*}
 P(\widehat{\mathcal{A}}=\mathcal{A})&\geq P(\max_{k\in \mathcal{A}}\widehat{\psi}_k<\min_{k\in \mathcal{A}^C}\widehat{\psi}_k) \\&\geq 1-P(\max_{1\leq k \leq K} |\widehat{\psi}_k-\psi_k|>|\max_{k\in \mathcal{A}}\psi_k-\min_{k\in \mathcal{A}^C}\psi_k|)\\&\geq 1-K\max_{1\leq k'\leq K}P(|\widehat{\psi}_{k'}-\psi_{k'}|>|\max_{k\in \mathcal{A}}\psi_k-\min_{k\in \mathcal{A}^C}\psi_k|).
    \end{align*}
According to Lemma \ref{lem:1}, $|\psi_k-\widehat{\psi_k}|=O_p(n_k^{-1/2}+n_0^{-1/2})$. According to Condition \ref{con:3}, 
\[\frac{|\max_{k\in \mathcal{A}}\psi_k-\min_{k\in \mathcal{A}^C}\psi_k|}{n_*^{-1/2}+n_0^{-1/2}}\rightarrow\infty.\]
Hence \[
P(\widehat{\mathcal{A}}=\mathcal{A})=1-o(1).
\]
Then using Theorem \ref{thm:2} where we substitute $\mathcal{A}$ for $\{1, \ldots, K\}$, we have\[
d_{\mathcal{W}}^2(\widehat{m}_G^{(0)}(x),m_G^{(0)}(x))=O_p\Big(n_0^{-1/2+\epsilon}\big(\max_{k\in \mathcal{A}}\psi_k+\frac{\sum_{k\in \mathcal{A}\cup \{0\}}\sqrt{n_k}}{\sum_{k\in \mathcal{A}\cup\{0\}}n_k}+(\sum_{k\in \mathcal{A}\cup\{0\}}n_k)^{-1/2}\big)\Big).
\]

\end{proof}

\section{Transfer Learning for Local \f Regression}
\label{app:local}
For simplicity, we consider a scalar predictor $X^{(k)} \in \mathbb{R}$ for $k = 0, \ldots, K$. For predictors in $\mathbb{R}^p$, the explicit form of the weight function can be found in Section 2.3 of \cite{iao:24}. Under the same source-target data setting, we define

\begin{align*}
    m^{(k)}(x)&=\argmin_{\mu\in \mathcal{W}}E\{d_{\mathcal{W}}^2(\nu^{(k)}, \mu)|X^{(k)}=x\},\\
    m_{L, h}^{(k)}(x)&=\argmin_{\mu\in \mathcal{W}}E\{s_L^{(k)}(x,h)d_{\mathcal{W}}^2(\nu^{(k)}, \mu)\},\\
    \widehat{m}_{L, h}^{(k)}(x)&=\argmin_{\mu\in \mathcal{W}}\frac{1}{n_k}\sum_{i=1}^{n_k}s_{iL}^{(k)}(x,h)d_{\mathcal{W}}^2(\nu_i^{(k)}, \mu)\},
\end{align*}
where $s_L^{(k)}(x)$ and $s_{iL}^{(k)}(x)$ represents the population and sample weight functions of local \f regression for the $k$th source.

For notational simplicity, define
\begin{align*}
    f_{\oplus}^{(k)}(x)&=E\{F_{\nu^{(k)}}^{-1}|X^{(k)}=x\},\\
    f_{h}^{(k)}(x)&=E\{s_L^{(k)}(x,h)F_{\nu^{(k)}}^{-1}\},\\
    f_{h}(x)&=\sum_{k=0}^K\alpha_kf_{h}^{(k)}(x).
\end{align*}

Similar to Section \ref{sec:3}, we introduce the Local Wasserstein Transfer Learning (LWaTL) algorithm in Algorithm~\ref{alg:A.1}, assuming that all sources are informative. When the informative set is unknown, we incorporate an additional step to identify the informative set, as outlined in Algorithm~\ref{alg:A.2}. Theoretical guarantees for these algorithms are provided in Theorem~\ref{thm:B.1} and Theorem~\ref{thm:B.2}.

\begin{algorithm}[tb]
\caption{Local Wasserstein Transfer Learning (LWaTL)}
\label{alg:A.1}
\begin{algorithmic}[1]
    \REQUIRE Target and source data
    $ \{ (x_{i}^{(0)}, \nu_{i}^{(0))})\}_{i=1}^{n_{0}} \cup \big( \cup_{1\leq k \leq K}  \{ (x_{i}^{(k)}, \nu_{i}^{(k)})\}_{i=1}^{n_{k}} \big)$, regularization parameter $\lambda$, bandwidth $h$ and query point $x \in \mathbb{R}$.
    \ENSURE Target estimator $\widehat{m}_{L,h}^{(0)}(x)$.
    \STATE Weighted auxiliary estimator
    \[\widehat{f}_h(x) =\frac{1}{n_0+n_{\mathcal{A}}}\sum_{k=0}^{K}n_k\widehat{f}_h^{(k)}(x),\]
    where $\widehat{f}_h^{(k)}(x)=n_k^{-1}\sum_{i=1}^{n_k}s_{iL}^{(k)}(x,h)F^{-1}_{\nu_i^{(k)}}$ and $n_{\mathcal{A}}=\sum_{k=1}^Kn_k$.
    \STATE Bias correction using target data
    \begin{equation*}
\widehat{f}_{0h}(x) =\argmin_{g\in L^2(0,1)}\frac{1}{n_0}\sum _{i=1}^{n_0}s_{iL}^{(0)}(x,h)\|F^{-1}_{\nu_i^{(0)}}-g\|_2^2 + \lambda \|g-\widehat{f}_h(x) \|_2.
    \end{equation*}
    \STATE Projection to Wasserstein space
    \begin{equation*}
        \widehat{m}_{L,h}^{(0)}(x) = \argmin_{\mu \in \mathcal{W}}\big \|F^{-1}_{\mu}-\widehat{f}_{0h}(x)  \big\|_2.   
    \end{equation*}
\end{algorithmic}
\end{algorithm}

\begin{algorithm}[tb]
\caption{Adaptive Local Wasserstein Transfer Learning (ALWaTL)}
\label{alg:A.2}
\begin{algorithmic}[1]
    \REQUIRE Target and source data
    $ \{ (x_{i}^{(0)}, \nu_{i}^{(0)}) \}_{i=1}^{n_{0}} \cup \big(\cup_{1\leq k \leq K}  \{ (x_{i}^{(k)}, \nu_{i}^{(k)}) \}_{i=1}^{n_{k}} \big)$,  regularization parameter $\lambda$, bandwidth $h$, number of informative sources $L$, and query point $x \in \mathbb{R}$.
    \ENSURE Target estimator $\widehat{m}_{L,h}^{(0)}(x)$.
    \STATE Compute discrepancy scores.  For each source dataset  $k = 1,\ldots, K$, compute the empirical discrepancy
    \begin{equation*}
        \widehat{\psi}_{k,h}=\|\widehat{f}_h^{(0)}(x)-\widehat{f}_h^{(k)}(x)\|_2,
    \end{equation*}
        where $\widehat{f}_h^{(k)}(x)=n_k^{-1}\sum_{i=1}^{n_k}s_{iL}^{(k)}(x,h)F^{-1}_{\nu_i^{(k)}}$. Construct the adaptive informative set by selecting the $L$ smallest discrepancy scores
 \begin{equation*}
        \widehat{\mathcal{A}}=\{1\leq k\leq K:\widehat{\psi}_{k,h} \text{ is among the smallest } L  \text{ values}\}.
    \end{equation*}
    \STATE Weighted auxiliary estimator
    \[\widehat{f}_h(x) =\frac{1}{\sum_{k\in\widehat{
        \mathcal{A}}\cup\{0\}}n_k}\sum_{k\in \widehat{\mathcal{A}}\cup\{0\}}n_k\widehat{f}_h^{(k)}(x).\]
    \STATE Bias correction using target data
    \begin{equation*}
\widehat{f}_{0h}(x) = \argmin_{g\in L^2(0,1)}\frac{1}{n_0}\sum _{i=1}^{n_0}s_{iL}^{(0)}(x,h)\|F^{-1}_{\nu_i^{(0)}}-g\|_2^2+\lambda \|g-\widehat{f}_h(x)\|_2.
   \end{equation*}
   \STATE Projection to Wasserstein space
   \begin{equation*}
    \widehat{m}_{L,h}^{(0)}(x)=\argmin_{\mu \in \mathcal{W}}\big\|F^{-1}_{\mu}-\widehat{f}_{0h}(x) \big\|_2.
   \end{equation*}
\end{algorithmic}
\end{algorithm}

We impose the following condition, where the first two parts correspond to the kernel and distributional assumptions that are standard in local linear regression.

\begin{condition}\label{cond:C.1}
    \begin{itemize}
        \item[(i)] The kernel $K$ is a probability density function, symmetric around zero. Furthermore, defining $K_{sj}=\int_\mathbb{R}K^s(u)u^jdu$, $|K_{14}|$ and $|K_{26}|$ are both finite.
        \item[(ii)] The marginal density $q^{(k)}$ of $X^{(k)}$ and the conditional density of $X^{(k)}$ given $\nu^{(k)}$, $g_{\nu^{(k)}}^{(k)}$, exist and are  twice continuously differentiable. Besides, it satisfies that 
        \[\sup_{0 \leq k\leq K}\sup_{z, \nu^{(k)} }\max\{(q^{(k)})''(z),(g_{\nu^{(k)}}^{(k)})''(z)\}<H_3\] 
        for some $H_3>0$.
        \item[(iii)]  $h\rightarrow 0$ and  $n_0h\rightarrow \infty$.
        \item[(iv)] $\frac{\sum_{k=1}^Kn_k^{-1}}{h}=o(1)$.
        \item[(v)] $\nu^{(k)}$ have a common bounded domain.
        \item[(vi)] $u_j^{(k)}:=E \big(K_h(X_i^{(k)}-x)(X_i^{(k)}-x)^j\big)$, $j=0,1,2$ and $\sigma_0^{(k)}:=u_0^{(k)}u_2^{(k)}-(u_1^{(k)})^2>0$ are bounded.
     \end{itemize}
\end{condition}

The following theorem establishes the rate of convergence for the LWaTL algorithm.

\begin{theorem}\label{thm:B.1}
Assume Condition \ref{cond:C.1} holds and the regularization parameter satisfies $\lambda\asymp n_0^{-1/2}h^{-1/2-\epsilon}$ for some $\epsilon>0$. Then, for the LWaTL algorithm and a fixed $x \in \mathbb{R}^p$, we have 
\begin{equation*}
d_{\mathcal{W}}^2(\widehat{m}_{L,h}^{(0)}(x),m^{(0)}(x))=O_p\Bigg(h^4+n_0^{-1/2}h^{-1/2-\epsilon}\Big[\psi_L+h^2+h^{-1/2}\Big(\big(\sum_{k=0}^K n_k^{-1}\big)^{1/2}+\sum_{k=0}^K\frac{\sqrt{n_k}}{n_0+n_{\mathcal{A}}}\Big)\Big]\Bigg),
\end{equation*}
where $\psi_L=\max_{1\leq k\leq K}\|f_{\oplus}^{(0)}(x)-f_{\oplus}^{(k)}(x)\|$.
\end{theorem}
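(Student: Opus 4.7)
The plan is to mirror the three-step structure of the proof of Theorem~\ref{thm:2}, replacing Lemma~\ref{lem:1} and Theorem~\ref{thm:1} by bandwidth-dependent local analogues and explicitly tracking the additional local-linear smoothing bias. Since the projection in Step 3 of Algorithm~\ref{alg:A.1} is the nearest-point map onto a closed convex subset of $L^2(0,1)$ and $f_\oplus^{(0)}(x) = F^{-1}_{m^{(0)}(x)}$ lies in $\mathcal{W}$, one has $d_{\mathcal{W}}^2(\widehat{m}_{L,h}^{(0)}(x), m^{(0)}(x)) \le \|\widehat{f}_{0h}(x) - f_\oplus^{(0)}(x)\|_2^2$. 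The triangle inequality then splits this into the local-linear bias $\|f_h^{(0)} - f_\oplus^{(0)}\|_2$ and the stochastic term $\|\widehat{f}_{0h} - f_h^{(0)}\|_2$. Under the smoothness in Condition~\ref{cond:C.1}(i)--(ii), a standard Taylor expansion of $q^{(0)} g^{(0)}_{\nu^{(0)}}$ around $x$ combined with the second-moment identities $K_{10} = 1$, $K_{11} = 0$ gives $\|f_h^{(0)} - f_\oplus^{(0)}\|_2 = O(h^2)$; after squaring this is the leading $h^4$ term.

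For the stochastic term, the optimality of $\widehat{f}_{0h}$ in Step 2 yields, exactly as in Theorem~\ref{thm:2},
\[
\|\widehat{f}_{0h} - f_h^{(0)}\|_2^2 \le \lambda\,\|f_h^{(0)} - \widehat{f}_h\|_2 + 2\|\widehat{f}_{0h} - f_h^{(0)}\|_2\,\|R_n\|_2,
\]
where $R_n = n_0^{-1}\sum_{i=1}^{n_0} s_{iL}^{(0)}(x,h)F^{-1}_{\nu_i^{(0)}} - f_h^{(0)}$. A local version of Lemma~\ref{lem:1}, obtained by repeating its bounded-variation M-estimation argument with the envelope $G_\delta(z) \lesssim \delta\,|K_h(z-x)|$, gives $\|R_n\|_2 = O_p((n_0 h)^{-1/2})$, since the kernel squared inflates variances by $h^{-1}$ while the plug-in errors in $\widehat{u}_j^{(0)}$ are lower order under Condition~\ref{cond:C.1}(iii). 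The choice $\lambda \asymp n_0^{-1/2} h^{-1/2-\epsilon}$ then makes $\{\|R_n\|_2 \le \lambda/2\}$ asymptotically certain, and on this event $\|\widehat{f}_{0h} - f_h^{(0)}\|_2^2 \le 2\lambda\,\|f_h^{(0)} - \widehat{f}_h\|_2$. It remains to control $\|f_h^{(0)} - \widehat{f}_h\|_2 \le \|f_h^{(0)} - f_h\|_2 + \|f_h - \widehat{f}_h\|_2$: the first summand is $\lesssim \psi_L + h^2$ by writing $f_h^{(k)} - f_\oplus^{(k)} = O(h^2)$ for each source and using $\sum_k \alpha_k (f_\oplus^{(0)} - f_\oplus^{(k)}) \le \psi_L$; the second is supplied by a local version of Theorem~\ref{thm:1} giving the $h^{-1/2}\big[(\sum_{k=0}^K n_k^{-1})^{1/2} + \sum_{k=0}^K \sqrt{n_k}/(n_0 + n_{\mathcal{A}})\big]$ bound, where the first term arises from the pooled kernel-weighted variance and the second from aggregating the plug-in errors of $\widehat{u}_j^{(k)}$ across sources as in \eqref{eq:eq3}--\eqref{eq:sum3}. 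Substituting into the display and squaring reproduces the stated rate.

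The main obstacle is the bandwidth-dependent analogue of Theorem~\ref{thm:1}. Running the M-estimation argument of Lemma~\ref{lem:2} with the local weights $s_{iL}^{(k)}(x,h)$ requires replacing the sub-Gaussian concentration for the global plug-in errors $W_0^{(k)}, W_1^{(k)}$ by corresponding bounds for $\widehat{u}_j^{(k)} - u_j^{(k)}$, which are only $O_p((n_k h)^{-1/2})$; this is precisely where Condition~\ref{cond:C.1}(iv) enters, since it guarantees $h^{-1}\sum_k n_k^{-1} = o(1)$ so that the analogue of Condition~\ref{con:2} still holds. One must also re-verify part (iii) of Condition~\ref{cond:B.1} for the localized criterion: its strong-convexity constant reduces to the deterministic limit of $(u_2^{(k)} - u_1^{(k)}(X^{(k)} - x))/\sigma_0^{(k)}$, which is bounded away from zero for small $h$ by Condition~\ref{cond:C.1}(vi). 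Once these local bounds are established, the rest is bookkeeping: collect the $h^4$, $\lambda\psi_L$, $\lambda h^2$, and $\lambda h^{-1/2}(\cdot)$ contributions to obtain the bound stated in Theorem~\ref{thm:B.1}.
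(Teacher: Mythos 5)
Your proposal is correct and follows essentially the same route as the paper's proof: bound $d_{\mathcal{W}}^2$ by the $L^2$ distance of $\widehat{f}_{0h}$ to the target (the paper splits off $d_{\mathcal{W}}^2(m_{L,h}^{(0)}(x),m^{(0)}(x))=O(h^4)$ via Theorem 3 of the \f regression paper rather than a direct Taylor expansion, which is equivalent), apply the optimality inequality from Theorem~\ref{thm:2} on the event $\{\|\widehat f_h^{(0)}-f_h^{(0)}\|_2\le\lambda/2\}$, write $\psi_h=\psi_L+O(h^2)$, and rerun the Lemma~\ref{lem:2} M-estimation argument with local weights, using Chebyshev-type bounds for $\widehat u_j^{(k)}-u_j^{(k)}$ under Condition~\ref{cond:C.1}(iv). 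The only blemish is a harmless swapped attribution in one sentence (the $(\sum_k n_k^{-1})^{1/2}$ term comes from the plug-in errors in the weights and the $\sum_k\sqrt{n_k}/(n_0+n_{\mathcal{A}})$ term from the empirical-process envelope bound, not the other way around), which you in fact state correctly later in the sketch.
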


\begin{proof}
    Note that \begin{align*}
      d_{\mathcal{W}}^2(\widehat{m}_{L,h}^{(0)}(x),m^{(0)}(x)) \leq 2\{d_{\mathcal{W}}^2(m_{L,h}^{(0)}(x),m^{(0)}(x))+d_{\mathcal{W}}^2(m_{L,h}^{(0)}(x),\widehat{m}_{L,h}^{(0)}(x))\}
        &\\
        \leq 2\{d_{\mathcal{W}}^2(m_{L,h}^{(0)}(x),m^{(0)}(x))+\|f_{h}^{(0)}(x)-\widehat{f}_{0h}(x)\|_2^2\}.
    \end{align*}
   
   For the first term, We could use Theorem 3 of \cite{mull:19:6} since we can check all its assumptions easily, which is similar to the proof of Lemma \ref{lem:1}. Here we should consider the metric space $BV((0,1),H_4)$ for some $H_4> 0$ endowed with the probability measure naturally induced by the canonical embedding of the Wasserstein space.
   
    For the second term, first note that using Theorem 4 of \cite{mull:19:6}, we have $\|f_{h}^{(0)}(x)-\widehat{f}_h^{(0)}(x)\|_2=O_p\big((n_0h)^{-1/2}\big)$. Then similar to the proof of Theorem \ref{thm:2}, we have \begin{align*}
        \|f_{h}^{(0)}(x)-\widehat{f}_{0h}(x)\|_2^2&\leq -\lambda\|\widehat{f}_{0h}(x)-\widehat{f}_{h}(x)\|_2+\lambda\|f_{h}^{(0)}(x)-\widehat{f}_{h}(x)\|_2\\&+2\|\widehat{f}_h^{(0)}(x)-f_{h}^{(0)}(x)\|_2\|\widehat{f}_{0h}(x)-f_{h}^{(0)}(x)\|_2.
    \end{align*}
    Hence in $E_n:=\{\|f_{h}^{(0)}(x)-\widehat{f}_h^{(0)}(x)\|_2\leq\frac{1}{2}\lambda\}$, \begin{equation}\label{eq:split}
        \|f_{h}^{(0)}(x)-\widehat{f}_{0h}(x)\|_2^2\leq2\lambda\|f_{h}^{(0)}(x)-\widehat{f}_{h}(x)\|_2\leq 2\lambda(\psi_h+\|f_{h}(x)-\widehat{f}_{h}(x)\|_2),
    \end{equation}
    
   where $\psi_h=\max_{1\leq k\leq K}\|f_{h}^{(k)}-f_h^{(0)}\|_2$. For the first term of \eqref{eq:split}, using Theorem 3 of \cite{mull:19:6} and (i), (ii) in Condition \ref{cond:C.1}, we have 
   \[\psi_h=\psi_L+O(h^2).\] 
   For the second term of \eqref{eq:split}, we could follow the proof of Lemma \ref{lem:2} to study the asymptotic rate of convergence. The difference is that we do not assume covariates are sub-Gaussian but we have upper bounds for moments related to covariates and the kernel.  In detail, we define $\Tilde{s}_{iL}^{(k)}(x,h):=K_h(X_i^{(k)}-x)\frac{u_0^{(k)}-u_1^{(k)}(X_i^{(k)}-x)}{(\sigma_0^{(k)})^2}$ and $D_i^{(k)}(g,x):=\|F^{-1}_{\nu_i^{(k)}}-g\|_2^2-\|F^{-1}_{\nu_i^{(k)}}-f_{h}^{(k)}(x)\|_2^2$. Then similar to \eqref{eq:E} and the proof of Theorem 4 in \cite{mull:19:6} we have for small $\delta>0$, 

     \begin{align*}
        &E\Big\{\underset{d_{L^2}(g,f_{h}^{(k)})<\delta}{\sup}\Big|\frac{1}{n_0+n_{\mathcal{A}}}\sum_{k=0}^K\sum_{i=1}^{n_k}
        \Tilde{s}_{iL}^{(k)}(x,h)D_i^{(k)}(g,x)-\sum_{k=0}^K\alpha_kE\{\Tilde{s}_{iL}^{(k)}(x,h)D_i^{(k)}(g,x)\}\Big|\Big\}\\&=O\Big(\delta h^{-1/2}\frac{\sum_{k=0}^K\sqrt{n_k}}{n_0+n_{\mathcal{A}}}\Big).
    \end{align*}

    Another term we need to bound is $\sum_{k=0}^K\sum_{i=1}^{n_k}|s_{iL}^{(k)}(x,h)-\Tilde{s}_{iL}^{(k)}(x,h)|$.
    Note that we can define $B_M:=\cap_{k=0}^K(\{|(\widehat{\sigma}_0^{(k)})^2|>\frac{1}{2}(\sigma_0^{(k)})^2 \}\cap_{j=0}^2(\{\frac{1}{n_k}\sum_{i=1}^{n_k}|K_h(X_i^{(k)}-x)(X_i^{(k)}-x)^j|<h^jM\}))$ for some large $M>0$. Then it is easy to check that \begin{equation*}
        P\big((B_M)^c\big)\leq \sum_{k=0}^K\frac{C_{A1}}{n_k},
    \end{equation*}
    for some $C_{A1}>0$.
    In $B_M$, observe that \[|s_{iL}^{(k)}(x,h)-\Tilde{s}_{iL}^{(k)}(x,h)|\leq |W_{0n}^{(k)}K_h(X_i^{(k)}-x)|+|W_{1n}^{(k)}K_h(X_i^{(k)}-x)(X_i^{(k)}-x)|,\] where \begin{equation*}
        W_{0n}^{(k)}:=\frac{\widehat{u}_2^{(k)}}{(\widehat{\sigma}_0^{(k)})^2}-\frac{u_2^{(k)}}{(\sigma_0^{(k)})^2},\quad W_{1n}^{(k)}:=\frac{\widehat{u}_1^{(k)}}{(\widehat{\sigma}_0^{(k)})^2}-\frac{u_1^{(k)}}{(\sigma_0^{(k)})^2}.
    \end{equation*}
    Note that in $B_M$, it is easy to check 
    \[|W_{0n}^{(k)}|\leq C_{A2}\Big(\sum_{j=0}^2h^{-j}\Big|u_j^{(k)}-\widehat{u}_j^{(k)}\Big|\Big)\]
    and 
    \[|W_{1n}^{(k)}|\leq C_{A2}\Big(\sum_{j=0}^2h^{-1-j}\Big|u_j^{(k)}-\widehat{u}_j^{(k)}\Big|\Big)\] 
    for some constants $C_{A2}>0$.
    Hence in $B_M$, \begin{align*}
            &P\Big(\sum_{k=0}^K\sum_{i=1}^{n_k}|s_{iL}^{(k)}(x,h)-\Tilde{s}_{iL}^{(k)}(x,h)|>t\Big)\\& \leq P\Big(C_{A3}\sum_{k=0}^K \alpha_k(|W_{0n}|+h|W_{1n}|)>t\Big)\\& \leq
            \frac{1}{t^2h}C_{A4}\sum_{k=0}^K \frac{1}{n_k}.
    \end{align*}
    Hence $\sum_{k=0}^K\sum_{i=1}^{n_k}|s_{iL}^{(k)}(x,h)-\Tilde{s}_{iL}^{(k)}(x,h)|=O_p(h^{-1/2}\sum_{k=0}^Kn_k^{-1}).$
    In addition, we can check that $\widehat{f}_{h}(x)\in BV((0,1),H_5)$ exists almost surely for some $H_5>0$ by showing $\frac{1}{n}\sum_{k=1}^K\sum_{i=1}^{n_k}|s_{iL}^{(k)}(x,h)|=O_p(1)$. Also utilizing the same technique in the proof of Lemma \ref{lem:2}, It suffices to show that $\|f_{h}(x)-\widehat{f}_{h}(x)\|_2=o_p(1)$. It is just a simple generalization of Lemma 2 in \cite{mull:19:6}.
    Eventually, we have\begin{equation*}
        \|f_{h}^{(0)}(x)-\widehat{f}_{0h}(x)\|_2=O_p\Bigg(h^{-1/2}\Big((\sum_{k=0}^K n_k^{-1})^{1/2}+\sum_{k=0}^K\frac{\sqrt{n_k}}{n_0+n_{\mathcal{A}}}\Big)\Bigg)
    \end{equation*}
    and \begin{equation*}
d^2_{\mathcal{W}}(\widehat{m}_{L,h}^{(0)}(x),m^{(0)}(x))=O_p\Bigg(h^4+n_0^{-1/2}h^{-1/2-\epsilon}\Big[\psi_L+h^2+h^{-1/2}\Big(\big(\sum_{k=0}^K n_k^{-1}\big)^{1/2}+\sum_{k=0}^K\frac{\sqrt{n_k}}{n_0+n_{\mathcal{A}}}\Big)\Big]\Bigg).
    \end{equation*}
\end{proof}

\begin{remark}
    Theorem~\ref{thm:B.1} can be extended to general metric spaces and serves as a parallel version of Lemma \ref{lem:2}. {Besides, if $n_*$ is much larger than $n_0$ and $\psi_L=O(n_0^{-k})$ with $k>\frac{6}{15-10\epsilon}$, then among bandwidth sequences $h=n_0^{-r}$, the optimal sequence is achieved at $r^*=\frac{k}{2}$ , leading to the convergence rate 
    \[ d^2_{\mathcal{W}}(\widehat{m}_{L,h}^{(0)}(x),m^{(0)}(x))=O_p(n_0^{-\frac{3k-2\epsilon k+2}{4}}),\]
    which surpasses the convergence rate of local \f regression $O_p(n_0^{-4/5})$ \citep{mull:19:6}.}
\end{remark}

There is also a parallel version of Theorem~\ref{thm:3}, built upon the following condition.
    
\begin{condition}\label{cond:C.2}
Suppose the regularization parameter satisfies $\lambda \asymp n_0^{-1/2}h^{-1/2-\epsilon}$ for some $\epsilon>0$ and for some $\epsilon'>\epsilon$, There exists a non-empty subset $\mathcal{A} \subset\{1, \ldots, K\}$ such that
\[\frac{h^2+(n_*h)^{-1/2}+(n_0h)^{-1/2}}{\min_{k\in\mathcal{A}^C}\psi_{k,L}}=o(1),\quad\frac{\max_{k\in \mathcal{A}}\psi_{k,L}}{h^2+(n_*h)^{-1/2}+n_0^{-1/2}h^{-1/2+\epsilon'}}=O(1),\]
where $\mathcal{A}^C=\{1, \ldots, K\}\symbol{92}\mathcal{A}$, $n_*=\min_{1\leq k\leq K}n_k$ and $\psi_{k,L}=\|f_{\oplus}^{(0)}(x)-f_{\oplus}^{(k)}(x)\|_2$.
\end{condition}

\begin{theorem}\label{thm:B.2}
Assume Conditions~\ref{cond:C.1} and \ref{cond:C.2} hold. Then for the ALWaTL algorithm we have 
 \begin{align*}
       &d^2_{\mathcal{W}}(\widehat{m}_{L,h}^{(0)}(x),m(x))\\&=O_p\Bigg(h^4+n_0^{-1/2}h^{-1/2-\epsilon}\Big[\max_{k\in \mathcal{A}}\psi_{k,L}+h^2\\&+h^{-1/2}\Big(\big(\sum_{k\in \mathcal{A}\cup\{0\}}n_k^{-1}\big)^{1/2}+\sum_{k\in \mathcal{A}\cup \{0\}}\frac{\sqrt{n_k}}{\sum_{k\in \mathcal{A}\cup\{0\}}n_k}\Big)\Big]\Bigg).
   \end{align*} 
\end{theorem}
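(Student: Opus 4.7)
The plan is to mirror the two-step proof of Theorem~\ref{thm:3}, adapting each step to the local \f regression setting using the local rates derived in Theorem~\ref{thm:B.1}. Concretely, I first establish the consistency $P(\widehat{\mathcal{A}}=\mathcal{A})\to 1$, and then invoke Theorem~\ref{thm:B.1} on the event $\{\widehat{\mathcal{A}}=\mathcal{A}\}$ with $\mathcal{A}$ in place of $\{1,\ldots,K\}$. Since this event has probability tending to one, the stated rate follows.

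For the consistency step, I need a uniform bound on $|\widehat{\psi}_{k,h}-\psi_{k,L}|$ for $k=1,\ldots,K$. By the triangle inequality,
\[
|\widehat{\psi}_{k,h}-\psi_{k,L}| \le \|\widehat{f}_h^{(0)}(x)-f_{\oplus}^{(0)}(x)\|_2 + \|\widehat{f}_h^{(k)}(x)-f_{\oplus}^{(k)}(x)\|_2.
\]
Each summand is controlled by decomposing it into a stochastic piece and a smoothing bias: the convergence $\|\widehat{f}_h^{(k)}(x)-f_h^{(k)}(x)\|_2=O_p((n_kh)^{-1/2})$ follows from Theorem 4 of \cite{mull:19:6} (used already in the proof of Theorem~\ref{thm:B.1}), while $\|f_h^{(k)}(x)-f_{\oplus}^{(k)}(x)\|_2=O(h^2)$ follows from Theorem 3 of \cite{mull:19:6} under parts (i)--(ii) of Condition~\ref{cond:C.1}. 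Combining these gives $|\widehat{\psi}_{k,h}-\psi_{k,L}|=O_p\bigl(h^2+(n_kh)^{-1/2}+(n_0h)^{-1/2}\bigr)$ uniformly in $k$ since $K$ is fixed. Then, arguing as in the proof of Theorem~\ref{thm:3},
\[
P(\widehat{\mathcal{A}}=\mathcal{A}) \ge 1 - K\max_{1\le k\le K}P\bigl(|\widehat{\psi}_{k,h}-\psi_{k,L}|>\tfrac{1}{2}|\min_{k\in\mathcal{A}^C}\psi_{k,L}-\max_{k\in\mathcal{A}}\psi_{k,L}|\bigr),
\]
and the first half of Condition~\ref{cond:C.2} together with the second half guarantees that this gap dominates the error $h^2+(n_*h)^{-1/2}+(n_0h)^{-1/2}$, yielding $P(\widehat{\mathcal{A}}=\mathcal{A})\to 1$.

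For the final step, on the event $\{\widehat{\mathcal{A}}=\mathcal{A}\}$, Steps 2--4 of Algorithm~\ref{alg:A.2} coincide with Steps 1--3 of Algorithm~\ref{alg:A.1} applied to the restricted source index set $\mathcal{A}\cup\{0\}$. Therefore the bound from Theorem~\ref{thm:B.1}, with $\{1,\ldots,K\}$ replaced by $\mathcal{A}$ and $\psi_L$ replaced by $\max_{k\in\mathcal{A}}\psi_{k,L}$, applies directly, and the stated rate follows by combining the two events via $d_\mathcal{W}^2(\widehat{m}_{L,h}^{(0)}(x),m^{(0)}(x))=d_\mathcal{W}^2(\widehat{m}_{L,h}^{(0)}(x),m^{(0)}(x))\mathbf{1}_{\{\widehat{\mathcal{A}}=\mathcal{A}\}}+o_p(1)$.

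The main obstacle I anticipate is verifying that the separation induced by Condition~\ref{cond:C.2} is the right one. Unlike the global case (Condition~\ref{con:3}), both the stochastic rate and the smoothing bias $h^2$ appear on both sides of the separation, and one must check that $\max_{k\in\mathcal{A}}\psi_{k,L}+O_p(h^2+(n_*h)^{-1/2}+(n_0h)^{-1/2})$ is with high probability strictly smaller than $\min_{k\in\mathcal{A}^C}\psi_{k,L}-O_p(h^2+(n_*h)^{-1/2}+(n_0h)^{-1/2})$. The two parts of Condition~\ref{cond:C.2} are precisely calibrated so that the first quantity is at most of order $h^2+(n_*h)^{-1/2}+n_0^{-1/2}h^{-1/2+\epsilon'}$ while the gap $\min_{k\in\mathcal{A}^C}\psi_{k,L}$ dominates $h^2+(n_*h)^{-1/2}+(n_0h)^{-1/2}$; since $\epsilon'>\epsilon$ and in particular $\epsilon'>0$, these two scales are separated, ensuring the separation with probability tending to one. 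Once this is carefully laid out, the remaining work is bookkeeping against the bound in Theorem~\ref{thm:B.1}.
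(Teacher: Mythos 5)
Your proposal matches the paper's proof essentially step for step: the same triangle-inequality decomposition of $|\widehat{\psi}_{k,h}-\psi_{k,L}|$ into a stochastic term of order $(n_kh)^{-1/2}+(n_0h)^{-1/2}$ (via Theorem 4 of \cite{mull:19:6}) and a smoothing bias of order $h^2$ (via Theorem 3 of \cite{mull:19:6}), the same union bound over the fixed number of sources to get $P(\widehat{\mathcal{A}}=\mathcal{A})\to 1$ under Condition~\ref{cond:C.2}, and the same final invocation of Theorem~\ref{thm:B.1} with $\mathcal{A}$ substituted for $\{1,\ldots,K\}$. The argument is correct and no genuinely different route is taken.
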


\begin{proof}
    We first prove $P(\widehat{\mathcal{A}}=\mathcal{A})\rightarrow 1$. Note that \begin{align*}
 P(\widehat{\mathcal{A}}=\mathcal{A})&\geq P(\max_{k\in \mathcal{A}}\widehat{\psi}_{k,h}<\min_{k\in \mathcal{A}^C}\widehat{\psi}_{k,h}) \\&\geq 1-P(\max_{1\leq k \leq K} (|\widehat{\psi}_{k,h}-\psi_{k,h}|+|\psi_{k,h}-\psi_{k,L}|)>|\max_{k\in \mathcal{A}}\psi_{k,L}-\min_{k\in \mathcal{A}^C}\psi_{k,L}|)\\&\geq 1-K\max_{1\leq k'\leq K}P(|\widehat{\psi}_{k'}-\psi_{k'}|+|\psi_{k',h}-\psi_{k',L}|)>|\max_{k\in \mathcal{A}}\psi_{k,L}-\min_{k\in \mathcal{A}^C}\psi_{k,L}|).
    \end{align*}
According to Theorem 3 of \cite{mull:19:6}, $|\psi_{k,h}-\psi_{k,L}|=O(h^2)$. Utilizing Theorem 4 of \cite{mull:19:6}, $|\psi_{k,h}-\widehat{\psi}_{k,h}|=O_p((n_kh)^{-1/2}+(n_0h)^{-1/2})$. According to Condition \ref{cond:C.2},
\[\frac{|\max_{k\in \mathcal{A}}\psi_{k,L}-\min_{k\in \mathcal{A}^C}\psi_{k,L}|}{h^2+(n_*h)^{-1/2}+(n_0h)^{-1/2}}\rightarrow\infty.\]
Hence \[
P(\widehat{\mathcal{A}}=\mathcal{A})=1-o(1).
\]
Then utilizing Theorem \ref{thm:B.1} where we substitute $\mathcal{A}$ for $\{1, \ldots, K\}$, we have 

\begin{align*}
    &d^2_{\mathcal{W}}(\widehat{m}_{L,h}^{(0)}(x),m^{(0)}(x))\\&=O_p\Bigg(h^4+n_0^{-1/2}h^{-1/2-\epsilon}\Big[\max_{k\in \mathcal{A}}\psi_{k,L}+h^2\\&+h^{-1/2}\Big(\big(\sum_{k\in \mathcal{A}\cup\{0\}}n_k^{-1}\big)^{1/2}+\sum_{k\in \mathcal{A}\cup \{0\}}\frac{\sqrt{n_k}}{\sum_{k\in \mathcal{A}\cup\{0\}}n_k}\Big)\Big]\Bigg).
\end{align*}
\end{proof}

\begin{remark}
{If $n_*$ is much larger than $n_0$, then the simplified rate is $O_p(h^4+n_0^{-1/2}h^{3/2-\epsilon}+n_0^{-1}h^{-1+\epsilon'-\epsilon})$. Among bandwidth sequences $h=n_0^{r}$, the optimal choice is achieved at $r^*=-\frac{1}{5-2\epsilon'}$, leading to the convergence rate 
\[
d^2_{\mathcal{W}}(\widehat{m}_{L,h}^{(0)}(x),m(x))=O_p(n_0^{-\frac{4-\epsilon-\epsilon'}{5-2\epsilon'}}),
\]
which is faster than the convergence rate of local \f regression $O_p(n_0^{-4/5})$ \citep{mull:19:6}.}
\end{remark}

\section{Transfer Learning for Wasserstein Regression with Empirical Measures}\label{app:em}
\f regression assumes that each distribution is fully observed. However, this assumption is often impractical, since in real-world settings one rarely encounters datasets where each observation itself constitutes a full distribution. To address,  Wasserstein regression with empirical distributions has been proposed \citep{zhou:24:1}, where distributions are replaced by their empirical versions, reaching a convergence rate of $O_p(n^{-1/2}+N_{\min}^{-1/4})$, where $N_{\min}$ representing the minimum number of observations. In this section we do the same modification to handle this obstacle, providing algorithms for global and local Wasserstein transfer learning with empirical measures (WaTL-EM), respectively. The only difference between them and those with full distributions is that we substitute empirical distributions $\widehat{\nu}_i^{(k)}$ defined as $\frac{1}{N_i}\sum_{i=1}^{N_i^{(k)}}\delta_{y_{ij}^{(k)}}$ for  the true but unobservable distributions $\nu_i^{(k)}$, $1\leq i\leq n_k$, $1\leq k\leq K$, where we recall the settings in Subsection \ref{subsec:3.1} and Appendix \ref{app:local}, additionally assume $\{y_{ij}^{(k)}\}_{j=1}^{N_i^{(k)}}$  are independently sampled from $\nu_i^{(k)}$ and define $\delta_z$ as the Dirac measure at $z$. In addition, we denote $N_{\min}$ as $\min_{i,k} N_i^{(k)}$.
\begin{algorithm}
\caption{Wasserstein Transfer Learning with Empirical measures (WaTL-EM)}
\label{alg:1EM}
\begin{algorithmic}[1]
    \REQUIRE Target and source data
    $ \{ (x_{i}^{(0)}, \{y_{ij}^{(0)}\}_{j=1}^{N_i^{(0)}}) \}_{i=1}^{n_{0}} \cup \big( \cup_{1\leq k \leq K}  \{ (x_{i}^{(k)}, \{y_{ij}^{(k)}\}_{j=1}^{N_i^{(k)}}\}_{i=1}^{n_{k}} \big)$, regularization parameter $\lambda$, and query point $x \in \mathbb{R}^p$.
    \ENSURE Target estimator $\widehat{m}_{G,EM}^{(0)}(x)$.
    \STATE Empirical measures:
   $ \widehat{\nu}_i^{(k)}=\frac{1}{N_i^{(k)}}\sum_{j=1}^{N_i^{(k)}}\delta_{y_{ij}^{(k)}}$.
    \STATE Weighted auxiliary estimator: 
    $\widehat{f}_{EM}(x) =\frac{1}{n_0+n_{\mathcal{A}}}\sum_{k=0}^{K}n_k\widehat{f}_{EM}^{(k)}(x)$,
    where $\widehat{f}_{EM}^{(k)}(x)=n_k^{-1}\sum_{i=1}^{n_k}s_{iG}^{(k)}(x)F^{-1}_{\widehat{\nu}_i^{(k)}}$.
    \STATE Bias correction using target data: 
    $\widehat{f}_{0,EM}(x) =\argmin_{g\in L^2(0,1)}\frac{1}{n_0}\sum _{i=1}^{n_0}s_{iG}^{(0)}(x)\|F^{-1}_{\widehat{\nu}_i^{(0)}}-g\|_2^2 + \lambda \|g-\widehat{f}_{EM}(x) \|_2.$
    \STATE Projection to Wasserstein space: 
    $\widehat{m}^{(0)}_{G,EM}(x) = \argmin_{\mu \in \mathcal{W}}\big \|F^{-1}_{\mu}-\widehat{f}_{0,EM}(x)\big\|_2.$
\end{algorithmic}
\end{algorithm}

\begin{algorithm}[tb]
\caption{Local Wasserstein Transfer Learning with Empirical measures (LWaTL-EM)}
\label{alg:A.1EM}
\begin{algorithmic}[1]
    \REQUIRE Target and source data
    $ \{ (x_{i}^{(0)}, \{y_{ij}^{(0)}\}_{j=1}^{N_i^{(0)}})\}_{i=1}^{n_{0}} \cup \big( \cup_{1\leq k \leq K}  \{ (x_{i}^{(k)}, \{y_{ij}^{(k)}\}_{j=1}^{N_i^{(k)}}\}_{i=1}^{n_{k}} \big)$, regularization parameter $\lambda$, bandwidth $h$ and query point $x \in \mathbb{R}$.
    \ENSURE Target estimator $\widehat{m}_{L,h}^{(0)}(x)$.
    \STATE Empirical measures:
   $ \widehat{\nu}_i^{(k)}=\frac{1}{N_i^{(k)}}\sum_{j=1}^{N_i^{(k)}}\delta_{y_{ij}^{(k)}}$.
    \STATE Weighted auxiliary estimator
    \[\widehat{f}_{h,EM}(x) =\frac{1}{n_0+n_{\mathcal{A}}}\sum_{k=0}^{K}n_k\widehat{f}_{h,EM}^{(k)}(x),\]
    where $\widehat{f}_{h,EM}^{(k)}(x)=n_k^{-1}\sum_{i=1}^{n_k}s_{iL}^{(k)}(x,h)F^{-1}_{\widehat{\nu}_i^{(k)}}$ and $n_{\mathcal{A}}=\sum_{k=1}^Kn_k$.
    \STATE Bias correction using target data
    \begin{equation*}
\widehat{f}_{0h,EM}(x) =\argmin_{g\in L^2(0,1)}\frac{1}{n_0}\sum _{i=1}^{n_0}s_{iL}^{(0)}(x,h)\|F^{-1}_{\widehat{\nu}_i^{(0)}}-g\|_2^2 + \lambda \|g-\widehat{f}_{h,EM}(x) \|_2.
    \end{equation*}
    \STATE Projection to Wasserstein space
    \begin{equation*}
        \widehat{m}_{L,h,EM}^{(0)}(x) = \argmin_{\mu \in \mathcal{W}}\big \|F^{-1}_{\mu}-\widehat{f}_{0h,EM}(x)  \big\|_2.   
    \end{equation*}
\end{algorithmic}
\end{algorithm}

\begin{theorem}\label{thm:1EM}
    Assume Conditions~\ref{con:1} and \ref{con:2} hold and the regularization parameter satisfies $\lambda \asymp (n_0^{-1/2}+N_{\min}^{-1/4})^{1-\epsilon}$ for some $\epsilon>0$. Then, for the WaTL-EM algorithm and a fixed $x \in \mathbb{R}^p$, it holds that 
\[d_{\mathcal{W}}^2(\widehat{m}_{G,EM}^{(0)}(x),m_G^{(0)}(x))= O_p\Big(\big(n_0^{-1/2}+N_{\min}^{-1/4}\big)^{1-\epsilon}\big(\psi+\frac{\sum_{k=0}^K\sqrt{n_k}}{n_0+n_{\mathcal{A}}}+(n_0+n_{\mathcal{A}})^{-1/2}+N_{\min}^{-1/4}\big)\Big).\]
where $\psi=\max_{1\leq k\leq K}\|f^{(0)}(x)-f^{(k)}(x)\|_2$ quantifies the maximum discrepancy between the target and source.
\end{theorem}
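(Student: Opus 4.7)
The plan is to follow the three-step structure used for Theorems~\ref{thm:1} and~\ref{thm:2} while propagating the additional discretization error at every stage. The central observation is that substituting $F^{-1}_{\nu_i^{(k)}}$ with $F^{-1}_{\widehat{\nu}_i^{(k)}}$ inside each sum introduces an $L^2$ error equal to $d_{\mathcal{W}}(\widehat{\nu}_i^{(k)}, \nu_i^{(k)})$, since the quantile representation is an isometry from $(\mathcal{W}, d_{\mathcal{W}})$ into $L^2(0,1)$. Under Condition~\ref{con:1} (bounded support), standard one-dimensional empirical Wasserstein results give $d_{\mathcal{W}}(\widehat{\nu}_i^{(k)}, \nu_i^{(k)}) = O_p((N_i^{(k)})^{-1/4})$ uniformly in $i$ and $k$, consistent with \cite{zhou:24:1}.

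First I would establish an empirical-measure analog of Lemma~\ref{lem:1}. Writing $\widehat{f}_{EM}^{(k)} - f^{(k)} = (\widehat{f}_{EM}^{(k)} - \widehat{f}^{(k)}) + (\widehat{f}^{(k)} - f^{(k)})$, Lemma~\ref{lem:1} itself controls the second summand at rate $O_p(n_k^{-1/2})$. For the first summand, the triangle inequality gives
\[
\|\widehat{f}_{EM}^{(k)} - \widehat{f}^{(k)}\|_2 \leq \frac{1}{n_k}\sum_{i=1}^{n_k} |s_{iG}^{(k)}(x)| \cdot d_{\mathcal{W}}(\widehat{\nu}_i^{(k)}, \nu_i^{(k)}),
\]
and the proof of Lemma~\ref{lem:2} already shows $n_k^{-1}\sum_i |s_{iG}^{(k)}(x)| = O_p(1)$, so $\|\widehat{f}_{EM}^{(k)} - f^{(k)}\|_2 = O_p(n_k^{-1/2} + N_{\min}^{-1/4})$. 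Plugging this into the aggregation argument that proves Theorem~\ref{thm:1} yields
\[
\|\widehat{f}_{EM}(x) - f(x)\|_2 = O_p\Big(\tfrac{\sum_{k=0}^K\sqrt{n_k}}{n_0+n_{\mathcal{A}}} + (n_0+n_{\mathcal{A}})^{-1/2} + N_{\min}^{-1/4}\Big),
\]
since the additional $N_{\min}^{-1/4}$ contribution from each source is preserved under the convex aggregation $\sum_k \alpha_k (\cdot)$.

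Next I would repeat the optimality-inequality calculation from the proof of Theorem~\ref{thm:2}, with $F_{\nu_i^{(0)}}^{-1}$ replaced by $F_{\widehat{\nu}_i^{(0)}}^{-1}$ throughout. This yields
\[
\|\widehat{f}_{0,EM} - f^{(0)}\|_2^2 \leq -\lambda\|\widehat{f}_{0,EM} - \widehat{f}_{EM}\|_2 + \lambda\|f^{(0)} - \widehat{f}_{EM}\|_2 + 2\|\widehat{f}_{0,EM} - f^{(0)}\|_2 \cdot \Big\|\tfrac{1}{n_0}\sum_i s_{iG}^{(0)}(x) F_{\widehat{\nu}_i^{(0)}}^{-1} - f^{(0)}\Big\|_2.
\]
The last factor is $O_p(n_0^{-1/2} + N_{\min}^{-1/4})$ by Step~1 applied to $k=0$, and the choice $\lambda \asymp (n_0^{-1/2} + N_{\min}^{-1/4})^{1-\epsilon}$ dominates it. On the high-probability event $E_n$ where this factor is at most $\lambda/2$, the cross term is absorbed and we obtain $\|\widehat{f}_{0,EM} - f^{(0)}\|_2^2 \leq 2\lambda(\psi + \|\widehat{f}_{EM} - f\|_2)$. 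Combined with the aggregation bound and the projection contraction $d_{\mathcal{W}}^2(\widehat{m}_{G,EM}^{(0)}(x), m_G^{(0)}(x)) \leq \|\widehat{f}_{0,EM} - f^{(0)}\|_2^2$, this delivers the stated rate.

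The hard part will be obtaining the $O_p(N_{\min}^{-1/4})$ bound in a form that interacts cleanly with the signed weights $s_{iG}^{(k)}$. Because these weights can be negative, one cannot directly invoke the Wasserstein isometry on the sum and must instead pass through the pointwise bound weighted by $|s_{iG}^{(k)}|$, while uniformity in $i$ across the $n_k$ samples relies crucially on the bounded-support assumption of Condition~\ref{con:1}. A secondary subtlety is that the weights depend only on the covariates $\{X_i^{(k)}\}$ while the empirical measures depend on the samples $\{y_{ij}^{(k)}\}$ drawn conditionally on $\nu_i^{(k)}$; conditioning on the covariates and the true responses decouples the two sources of randomness and prevents the $N_{\min}^{-1/4}$ and $n_k^{-1/2}$ rates from interacting multiplicatively.
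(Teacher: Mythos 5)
Your proposal is correct and follows essentially the same route as the paper: decompose the error into the full-distribution error (handled by Lemma~\ref{lem:1} and Theorem~\ref{thm:1}) plus a discretization term controlled by $\frac{1}{n}\sum_{k}\sum_{i}|s_{iG}^{(k)}(x)|\,d_{\mathcal{W}}(\widehat{\nu}_i^{(k)},\nu_i^{(k)})=O_p(N_{\min}^{-1/4})$, then rerun the Theorem~\ref{thm:2} optimality inequality with the empirical-measure quantities. The only cosmetic difference is that the paper bounds the weighted discretization sum via a conditional expectation argument (Theorem 7.9 of Bobkov--Ledoux) rather than your stated ``uniform in $i$ and $k$'' rate, which is the cleaner way to make precise the decoupling you describe in your final paragraph.
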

\begin{proof}
First note the following result has been established in \citep{zhou:24:1}.
$\|\widehat{f}^{(k)}_{EM}(x)-f^{(k)}(x)\|_2=O_p(n_k^{-1/2}+N^{-1/4}_{\min})$.

Also, an analogy of Theorem \ref{thm:1} can be established in a similar way, just note that analysis in proof of \ref{lem:2} we have established that
\[
   \frac{1}{n}\sum_{k=1}^K\sum_{i=1}^{n_k}|s_{iG}^{(k)}(x)|=O_p(1).
\]
Then using Theorem 7.9 in \citep{bobk:19}, we have \begin{align*}
&E\Big[\frac{1}{n}\sum_{k=1}^K\sum_{i=1}^{n_k}|s_{iG}^{(k)}(x)|\|F^{-1}_{\nu_i^{(k)}}-F^{-1}_{\widehat{\nu}_i^{(k)}}\|_2\mid\{ (x_{i}^{(0)},\\& \{y_{ij}^{(0)}\}_{j=1}^{N_i^{(0)}}) \}_{i=1}^{n_{0}} \cup \big( \cup_{1\leq k \leq K}  \{ (x_{i}^{(k)}, \{y_{ij}^{(k)}\}_{j=1}^{N_i^{(k)}})\}_{i=1}^{n_{k}} \big)\Big]\\&\leq \frac{1}{2N_{\min}^{1/4}}\frac{1}{n}\sum_{k=1}^K\sum_{i=1}^{n_k}|s_{iG}^{(k)}(x)|=O_p(N_{\min}^{-1/4}).
\end{align*}
Hence we have $\|\hat{f}-\hat{f}_{EM}\|_2=O_p(N_{\min}^{-1/4})$.
Then the augments in the proof of Theorem \ref{thm:2} still hold if $F_{\nu_i^{(0)}}^{-1}$, $\hat{f}^{(0)}$, $\hat{f}$ are replaced by their empirical-measure version. And the rate turns to be $O_p\Big((n_0^{-1/2}+N_{\min}^{-1/4})^{1-\epsilon}(\psi+\frac{\sum_{k=0}^K\sqrt{n_k}}{n_0+n_{\mathcal{A}}}+(n_0+n_{\mathcal{A}})^{-1/2}+N_{\min}^{-1/4})\Big)$.
\end{proof}

\begin{theorem}\label{thm:2EM}
Assume Condition \ref{cond:C.1} holds and the regularization parameter satisfies $\lambda\asymp (n_0^{-1/2}h^{-1/2}+N_{\min}^{-1/4})^{1-\epsilon}$ for some $\epsilon>0$.  Then, for the LWaTL-EM algorithm and a fixed $x \in \mathbb{R}^p$, it holds that 
\begin{align*}
&d_{\mathcal{W}}^2(\widehat{m}_{L,h}^{(0)}(x),m^{(0)}(x))\\&=O_p\Big((n_0^{-1/2}h^{-1/2}+N_{\min}^{-1/4})^{1-\epsilon}
\big(N_{\min}^{-1/4}\\&+\psi_L+h^2+h^{-1/2}\big(\sqrt{\sum_{k=0}^K\frac{1}{n_k}}+
\sum_{k=0}^K\frac{\sqrt{n_k}}{n_0+n_{\mathcal{A}}}\big)\big]\Big),
\end{align*}
where $\psi_L=\max_{1\leq k\leq K}\|f_{\oplus}^{(0)}(x)-f_{\oplus}^{(k)}(x)\|$.
\end{theorem}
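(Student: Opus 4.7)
The plan is to mirror the proof of Theorem~\ref{thm:B.1} while tracking the additional fluctuation introduced by substituting the empirical measures $\widehat{\nu}_i^{(k)}$ for the unobservable $\nu_i^{(k)}$. First I would split
\[d_{\mathcal{W}}^2(\widehat{m}_{L,h,EM}^{(0)}(x),m^{(0)}(x)) \leq 2\bigl\{d_{\mathcal{W}}^2(m_{L,h}^{(0)}(x),m^{(0)}(x))+\|f_{h}^{(0)}(x)-\widehat{f}_{0h,EM}(x)\|_2^2\bigr\},\]
where the first term is $O(h^4)$ by Theorem~3 of \cite{mull:19:6}, unaffected by empirical measures since the bias is purely a population-level quantity. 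The work is then in bounding $\|f_{h}^{(0)}(x)-\widehat{f}_{0h,EM}(x)\|_2$.

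By the optimality of $\widehat{f}_{0h,EM}$ in Step~3, the argument of \eqref{eq:split} gives
\[\|f_{h}^{(0)}(x)-\widehat{f}_{0h,EM}(x)\|_2^2 \leq 2\lambda\bigl(\psi_h + \|f_{h}(x)-\widehat{f}_{h,EM}(x)\|_2\bigr),\]
provided we can verify the analog of the event $E_n$, namely that $\|f_{h}^{(0)}(x) - n_0^{-1}\sum_{i=1}^{n_0}s_{iL}^{(0)}(x,h)F^{-1}_{\widehat{\nu}_i^{(0)}}\|_2 \le \lambda/2$ with probability tending to one. This is where the $N_{\min}^{-1/4}$ term first enters: conditionally on the data, Theorem~7.9 of \cite{bobk:19} yields $E\|F^{-1}_{\nu_i^{(k)}}-F^{-1}_{\widehat{\nu}_i^{(k)}}\|_2 \lesssim N_{\min}^{-1/4}$, so combined with Markov's inequality and the bound $n_k^{-1}\sum_i|s_{iL}^{(k)}(x,h)|=O_p(1)$ established in the proof of Theorem~\ref{thm:B.1}, I obtain $\|\widehat{f}_h^{(k)}-\widehat{f}_{h,EM}^{(k)}\|_2 = O_p(N_{\min}^{-1/4})$ for each $k$. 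Together with the Theorem~4 of \cite{mull:19:6} rate this gives $\|\widehat{f}_{h,EM}^{(k)} - f_h^{(k)}\|_2 = O_p((n_k h)^{-1/2} + N_{\min}^{-1/4})$, which justifies the scaling $\lambda \asymp (n_0^{-1/2}h^{-1/2}+N_{\min}^{-1/4})^{1-\epsilon}$.

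For the aggregate term $\|f_h(x)-\widehat{f}_{h,EM}(x)\|_2$ I would rerun the proof of Lemma~\ref{lem:2} in the local-regression flavor, as was already done in Theorem~\ref{thm:B.1}, after decomposing
\[\widehat{f}_{h,EM}(x)-f_h(x)= \bigl(\widehat{f}_{h,EM}(x)-\widehat{f}_{h}(x)\bigr)+\bigl(\widehat{f}_{h}(x)-f_h(x)\bigr).\]
The second difference is $O_p\bigl(h^{-1/2}(\sqrt{\sum_k n_k^{-1}} + \sum_k \sqrt{n_k}/(n_0+n_{\mathcal{A}}))\bigr)$ from the proof of Theorem~\ref{thm:B.1}. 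For the first difference, weighting by $\alpha_k = n_k/(n_0+n_{\mathcal{A}})$ and applying the $N_{\min}^{-1/4}$ conditional bound uniformly over $k$ yields $O_p(N_{\min}^{-1/4})$, because $\sum_k \alpha_k = 1$ and the total-weight bound $n_k^{-1}\sum_i|s_{iL}^{(k)}(x,h)|$ is tight on the event controlled in the proof of Theorem~\ref{thm:B.1}. Combining $\psi_h = \psi_L + O(h^2)$ with these two pieces inside $2\lambda(\psi_h+\|f_h-\widehat{f}_{h,EM}\|_2)$ produces the stated rate.

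The main obstacle is the uniform handling of the empirical-measure fluctuation across the $K+1$ sources in the aggregate step: one needs that the $L^2$ gap $\|F^{-1}_{\nu_i^{(k)}}-F^{-1}_{\widehat{\nu}_i^{(k)}}\|_2$ concentrates simultaneously against weights $s_{iL}^{(k)}(x,h)$ whose variance scales like $h^{-1}$. Because the kernel-induced variability $h^{-1/2}$ and the empirical-measure error $N_{\min}^{-1/4}$ enter multiplicatively through the weighted sum rather than additively, the argument must be organized so that the $h^{-1/2}$ factor multiplies only the sampling error of $\widehat{f}_h$ and not the empirical-measure error; otherwise one would get a suboptimal $h^{-1/2}N_{\min}^{-1/4}$ contribution. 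Conditioning on the covariates first, applying Theorem~7.9 of \cite{bobk:19} to extract an $N_{\min}^{-1/4}$ factor that is independent of $h$, and only then integrating out the $s_{iL}^{(k)}(x,h)$ ensures the correct rate.
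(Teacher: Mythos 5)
Your proposal follows essentially the same route as the paper's proof: both reduce to the Theorem~\ref{thm:B.1} argument after showing $\|\widehat{f}_h-\widehat{f}_{h,EM}\|_2=O_p(N_{\min}^{-1/4})$ via the conditional-expectation bound from Theorem~7.9 of \cite{bobk:19} combined with $\frac{1}{n}\sum_{k}\sum_{i}|s_{iL}^{(k)}(x,h)|=O_p(1)$, and both then rerun the bias-correction inequality \eqref{eq:split} with empirical-measure substitutes under the enlarged choice of $\lambda$. Your explicit remark about conditioning first so that the $h^{-1/2}$ factor does not multiply the $N_{\min}^{-1/4}$ term is precisely the mechanism implicit in the paper's argument; no gap.
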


\begin{proof}
Using the same arguments in the proof of Theorem 2 in \cite{zhou:24:1} we can show that \[\|f_{h}^{(0)}(x)-\widehat{f}_{h,EM}^{(0)}(x)\|_2=O_p(n_0^{-1/2}h^{-1/2}+N_{\min}^{-1/4}).\]

Analysis in the proof of Theorem \ref{thm:B.1} shows\[\frac{1}{n}\sum_{k=1}^K\sum_{i=1}^{n_k}|s_{iL}^{(k)}(x,h)|=O_p(1).\]
Then using Theorem 7.9 in \citep{bobk:19}, we have 
\begin{align*}
   &E\Big[\frac{1}{n}\sum_{k=1}^K\sum_{i=1}^{n_k}|s_{iL}^{(k)}(x,h)|\|F^{-1}_{\nu_i^{(k)}}-F^{-1}_{\widehat{\nu}_i^{(k)}}\|_2\mid\{ (x_{i}^{(0)},\\& \{y_{ij}^{(0)}\}_{j=1}^{N_i^{(0)}}) \}_{i=1}^{n_{0}} \cup \big( \cup_{1\leq k \leq K}  \{ (x_{i}^{(k)}, \{y_{ij}^{(k)}\}_{j=1}^{N_i^{(k)}})\}_{i=1}^{n_{k}} \big)\Big]\\&\leq \frac{1}{2N_{\min}^{1/4}}\frac{1}{n}\sum_{k=1}^K\sum_{i=1}^{n_k}|s_{iL}^{(k)}(x,h)|=O_p(N_{\min}^{-1/4}). 
\end{align*}
Hence we have $\|\hat{f_h}-\hat{f}_{h,EM}\|_2=O_p(N_{\min}^{-1/4})$.
Then the augments in the proof of Theorem \ref{thm:B.1} still hold if $F_{\nu_i^{(0)}}^{-1}$, $\hat{f}_h^{(0)}$, $\hat{f}_h$ are replaced by their empirical-measure version. And the rate turns to be $O_p\Big((n_0^{-1/2}h^{-1/2}+N_{\min}^{-1/4})^{1-\epsilon}\big[N_{\min}^{-1/4}+\psi_L+h^2+h^{-1/2}\big(\sqrt{\sum_{k=0}^K\frac{1}{n_k}}+\sum_{k=0}^K\frac{\sqrt{n_k}}{n_0+n_{\mathcal{A}}}\big)\big]\Big)$.
\end{proof}

\end{document}